\documentclass[]{article}
\usepackage{amsmath}
\usepackage{amsfonts}
\usepackage{amssymb}
\usepackage{amsthm}
\usepackage{mathrsfs}
\usepackage{enumerate}
\usepackage{xspace}
\usepackage[hidelinks]{hyperref} % The hidelinks is to get rid of borders around refernces.
\usepackage{xcolor} % for the \merk command I define below (copying Doerr)

\usepackage[backend=bibtex,style=numeric]{biblatex}% https://tex.stackexchange.com/questions/306218/simple-way-to-use-bibtex

\newtheorem{theorem}{Theorem}
\newtheorem{proposition}[theorem]{Proposition}
\newtheorem{lemma}[theorem]{Lemma}

\newcommand{\norm}[1]{\left\lVert#1\right\rVert}

\newcommand{\oea}{\mbox{$(1 + 1)$~EA}\xspace}

\newcommand{\onemax}{\textsc{OneMax}\xspace}
\newcommand{\LO}{\textsc{Leading\-Ones}\xspace}
\newcommand{\leadingones}{\LO}

\newcommand{\blockleadingones}{\textsc{Block\-LeadingOnes}\xspace}

\newcommand{\jump}{\textsc{Jump}\xspace}

\newcommand{\diststepstones}{\mbox{$\textsc{DistantSteppingStones}_p$}\xspace}

\newcommand{\sspace}{\ensuremath{{\{0,1\}^n\xspace}}}
\newcommand{\ones}{\ensuremath{\mathbf{1}^n}\xspace}

\newcommand{\eps}{\varepsilon}
\newcommand{\R}{\mathbb{R}}
\newcommand{\fp}{\ensuremath{f_{p}}}
\newcommand{\xstar}{\ensuremath{x^*}}
\newcommand{\pso}{\ensuremath{p_{S,\ones}}}

\newcommand{\floor}[1]{\lfloor #1 \rfloor}
\newcommand{\ceil}[1]{\lceil #1 \rceil}

\DeclareMathOperator{\expectation}{E}

\DeclareMathOperator*{\argmax}{arg\,max}
\DeclareMathOperator*{\argmin}{arg\,min}

\title{All Constant Mutation Rates for the \\ $(1+1)$ Evolutionary Algorithm}
\author{Andrew James Kelley}
\begin{document}

\maketitle

\begin{abstract}
	For every mutation rate $p \in (0, 1)$, and for all $\varepsilon > 0$, 
	there is a fitness function $f : \{0,1\}^n \to \mathbb{R}$ with a unique maximum for which the 
	optimal mutation rate for the $(1+1)$ evolutionary algorithm 
	on $f$ is in $(p-\varepsilon, p+\varepsilon)$. In other words, the set of optimal
	mutation rates for the $(1+1)$ EA is dense in the interval $[0, 1]$.
	To show that, this paper introduces $\textsc{DistantSteppingStones}_p$,
	a fitness function which consists of large plateaus separated by large fitness valleys.

	%Let $M$ be the set of all mutation rates in $[0, 1]$ that are optimal 
	%for the $(1+1)$-EA on some optimization problem $f : \{0, 1\}^n \to \mathbb{R}$. 
	%This paper shows that $M$ is dense in the unit interval $[0, 1]$.
	%The $\textsc{DistantSteppingStones}_\alpha$ fitness functions
	%are introduced to prove the result on optimal mutation rates.
\end{abstract}

\section{Introduction}
In the last couple of decades, much progress has been made in analyzing the runtime of evolutionary algorithms (EAs). The theorems
proved about EAs have led to insights that have been translated to improvements on their use in solving real problems.

For some fitness functions, we know the precise expected runtime for different mutation rates. In some cases, 
we can calculate which static mutation
rate minimizes the expected runtime (thus called the optimal mutation rate). Of the set of possible mutation rates, 
it is natural to ask which ones are 
possible as optimal mutation rates. Knowing what kinds of fitness functions have what optimal mutation rates helps us
better understand what mutation rates should be used.

As stated in the abstract of \cite{DoerrLMN17}, the generally recommended mutation rate for the \oea is $1/n$. It is known that
the optimal mutation rate for the \oea on \onemax is $1/n$, and in fact, it was shown in \cite{GiessenW17} that as long as $\lambda$
isn't too large (about logarithmic in $n$ or smaller), then $1/n$ is optimal for the $(1+\lambda)$ EA too.
See \cite{Ochoa02} for further discussion on using a mutation rate of $p = 1/n$.

For a much larger optimal mutation rate, \cite{JansenW00} gives a problem where the optimal mutation rate for the \oea
is $\Theta(\log n /n)$ for some appropriately chosen constant. Here, a mutation rate that is not $\Theta(\log n / n)$ results
in superpolynomial expected runtime. On the other hand, a mutation rate of $c \ln n / n$ with $c = 1/(4 \ln 2) \leq 0.361$
results in an expected runtime of $O(n^{2.361})$. Note that the fitness function defined in \cite{JansenW00} has a jump in it.
%\merk{TODO: Look at reference 11 from the paper ``Optimal Fixed and Adaptive Mutation Rates
%for the LeadingOnes Problem''. See what I highlighted on page 2 of my pdf.} 

The optimal mutation rate for the \oea on \leadingones was calculated in \cite{BottcherDN10} to be approximately $1.5936/n$.
It was shown in \cite{doerrK2023fourier} that the optimal mutation rate on  \blockleadingones (a generalization of \leadingones)
is this same value $c/n$, where $c = \argmin_{x > 0} (e^x-1)/x^2$; this optimal mutation rate is valid only if the block length is not too large.

If the fitness function has a local max, then the optimal mutation rate can be even higher. The paper \cite{DoerrLMN17} 
gives the optimal mutation rate for the $\jump_m$ function as $m/n$, where $m = o(n)$. These mutation rates are still $o(1)$.

Many researchers might consider mutation rates greater than 1/2 to be contrived. However, perhaps a mutation rate 
close to 1/2 is
what should be considered more odd for an EA than a mutation rate close to 1. Indeed, a mutation rate of 1/2 means that no memory of past individuals
is preserved. In that case, the \oea reduces to sampling $\sspace$ uniformly at random. Such an EA can never build off
of progress already made.
On the other hand, a mutation rate of $1 - \eps$ for small $\eps$ means that the current individual carries with it the 
memory of what recent previous individuals were approximately equal to. Indeed, in that case, with high probability, 
all recent ancestors of $x$ were either very similar to $x$ or
were very similar to what you get by flipping all bits of $x$. As a result, the \oea with a mutation rate of $1 - \eps$ can actually
make progress on some fitness functions.

%Although not yet published in a journal, it can be shown that the optimal mutation rate for the \oea on \needle
%is larger than 1/2 for all $n$. If we denote the optimal mutation rate for \needle on $\sspace$ by $p_n$, then
%it can be proved that 
%$p_n > 1/2$ for all $n$, and $p_n$ decreases monotonically, and also $p_n \to 1/2$ as $n \to \infty$. These facts are (nontrivial) 
%consequences of the theorem giving the exact expected runtime of the \oea on \needle for mutation rate $p$, 
%which was shown in \cite{doerrK2023fourier} to be equal to 
%$\displaystyle{\sum_{j=1}^n \binom{n}{j} \frac{1}{1 - (1-2p)^j}}$.

%Hence, there is a standard optimization problem having a mutation rate close 1/2. Consequently, 
%the present author thinks that it is not too theoretical
%to consider that there are optimization problems whose optimal mutation rate is near any $p \in (0,1)$.

The contribution of the present paper is to prove the following:
\begin{theorem}
  For every $p \in (0, 1)$ and for every $\eps > 0$, there exists a fitness function $f_n :\{0, 1\}^n \to \mathbb{R}$
   with a unique maximum such that the optimal mutation rate for the \oea on $f_n$ is in $(p -\eps, p+\eps)$.
\end{theorem}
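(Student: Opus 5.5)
We construct \diststepstones directly: a fitness function whose runtime is dominated by a chain of jumps, each of which is a single ``jump by a fixed fraction $p$ of the bits'' event. The guiding fact is elementary. If the algorithm sits on a plateau and the only way to improve is to flip a prescribed set of exactly $k$ positions out of a block of $m$ positions (and leave the other $m-k$ untouched), then the per-step success probability is $q^k(1-q)^{m-k}$. As a function of $q$ this is maximised exactly at $q=k/m$. So we choose integers $k,m$ with $k/m$ within $\eps/2$ of $p$, and design $f_n$ so that essentially all of the expected runtime comes from such events with ratio $k/m$.

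\textbf{Construction.} Split the $n$ bits into a number of blocks and use a \leadingones-style structure across blocks, so that blocks must be solved in order. Within a block of length $m=\Theta(n)$ (or of length $n$ itself, with several ``stepping stones''), the fitness is constant except at designated stepping-stone strings $s_0,s_1,\dots,s_r=\ones$. Consecutive stones have Hamming distance exactly $k\approx pm$, and each is strictly fitter than the previous one. Every other string in the block gets a fitness lower than the current stone, so it is never accepted: this gives large valleys with a plateau value determined by the last stone reached.

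To avoid rewarding shortcuts, we arrange that from $s_i$ the only improving moves are to some $s_j$ with $j>i$, and that reaching $s_j$ for $j\ge i+2$ requires a Hamming move whose probability is exponentially smaller for every $q$ in a range around $p$. The initial phase also needs care. It must reach $s_0$ from a uniform start. We handle this by making $s_0$ reachable via a fitness gradient, for example a \onemax-type slope toward $s_0$ on non-stone strings. That part costs only polynomial time for every constant $q\in(0,1)$, since we can simply bound the mutation at each step. Note that for $q>1/2$, complementation dynamics appear, so the slope should be symmetric-safe: use distance to $s_0$ or its complement. Finally, ensure the unique maximum is $\ones$.

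\textbf{Runtime analysis.} For mutation rate $q$, the expected time is
\[
T(q)=\sum_i \bigl(q^k(1-q)^{m-k}\bigr)^{-1}(1+o(1)) + \text{(lower order)}.
\]
The dominant term is $r\,\bigl(q^k(1-q)^{m-k}\bigr)^{-1}$. Since $m$ grows with $n$, this term is exponential, $\exp\bigl(m\,H(k/m\,\|\,q)+m\,H(k/m)\bigr)$. The ratio between $q$ and $k/m$ grows like $e^{m\,\mathrm{KL}(k/m\,\|\,q)}$, which dominates every polynomial lower-order term when $|q-k/m|\ge \eps/2$. Therefore the minimiser of $T$ lies within $\eps/2$ of $k/m$, and hence within $\eps$ of $p$. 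Rates $q=o(1)$ or $q=1-o(1)$ are treated separately; they are clearly worse, since the jump probability is then tiny.

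\textbf{Main obstacle.} The hard step is making the lower-order terms and the alternative transitions rigorously negligible \emph{uniformly in $q$}. This includes bounding the time to reach the first stone, excluding jumps that skip stones, and handling the fact that, once on a plateau, the algorithm stays put. The last point is helpful: an elitist \oea at stone $s_i$ has an exactly geometric waiting time. Our first attempt will be to fix the stone positions so that pairwise distances between non-consecutive stones are at least $2k$, and to bound the skipping probability by a union bound. That probability is at most $\max_{d\ge 2k} q^d(1-q)^{m-d}\cdot\#\text{stones}$, which is exponentially smaller than the consecutive-step probability near $q=k/m$.
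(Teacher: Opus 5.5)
Your guiding fact is exactly the mechanism the paper uses. Its only jump onto a single prescribed string is the last one, from the level $S_1$ (strings with $\lfloor (1-p)n\rfloor$ ones) to $\mathbf{1}^n$, with probability $q^{\lceil pn\rceil}(1-q)^{\lfloor (1-p)n\rfloor}$, which peaks at $q=p$. The gap is in getting to the launching point.

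Your claim that the initial slope phase costs polynomial time for every constant $q$ is false, and no choice of gradient can fix it. From any $x\neq s_0$ the offspring equals $s_0$ with probability $q^{d}(1-q)^{n-d}$, where $d=d(x,s_0)\ge 1$. So under \emph{any} fitness function the expected hitting time of a single string $s_0$ is at least roughly $1/\max\{q(1-q)^{n-1},q^{n}\}$, which is exponential for constant $q$. Your ``lower-order terms'' are therefore exponential and compete directly with the jump cost $\alpha^n$, where $\alpha=1/(p^p(1-p)^{1-p})$.

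With a single block ($m=n$) you could plausibly rescue the upper bound by comparing exponents. A \onemax-type slope, complement-symmetrised when $p>1/2$, costs $e^{n\min\{\ln\frac1p,\ln\frac1{1-p}\}+o(n)}$ at rate $p$, which is exponentially below $\alpha^n$ for $p\neq\frac12$. But that is an argument you would have to make.

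In the multi-block \leadingones-style variant even your main formula is wrong. An accepted jump in block $b$ must leave all $(b-1)m$ bits of the earlier blocks unflipped. Its probability is then $q^k(1-q)^{bm-k}$, maximised at $k/(bm)$ rather than $k/m$. The paper's design exists to avoid this cost. Its stones are whole Hamming levels $\lfloor ns_k\rfloor$ placed at the rate-$p$ drift points $s_{k+1}=f_p(s_k)$. So at rate $p$ some stone is reached within $O(\alpha^n)$ steps, and each backward step $S_{k+1}\to S_k$ costs $O(\alpha^n)$.

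The second gap is the lower bound for $|q-p|\ge\eps$, which your sketch leaves open. You need, uniformly in such $q$, that the process really is forced to wait at the stone, i.e.\ reaches it before $\mathbf{1}^n$ with probability bounded below. You also need that no other route into $\mathbf{1}^n$ is cheaper.

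The paper gets this cheaply because every non-stone string has fitness $0$. Until a stone or $\mathbf{1}^n$ is hit, the \oea is the unbiased mutation walk, so by symmetry a stone is hit first with probability at least $1/2$. Afterwards the only entrance to $\mathbf{1}^n$ is one jump from an explicitly known level, which is bounded case by case.

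A fitness gradient destroys that symmetry. Your complement-symmetric slope also creates a second attractor $\bar s_0$. From it (for $m=n$, $k=pn$) the optimum is reached with probability $q^{(1-p)n}(1-q)^{pn}$, which peaks at $q=1-p$ with the same value $\alpha^{-n}$. Such routes must be ruled out explicitly.

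Finally, your anti-skipping device fails for $p>1/2$. Spacing non-consecutive stones at least $2k\approx 2pm>m$ apart is impossible. Moreover, for $q>1/2$ the quantity $q^d(1-q)^{m-d}$ increases in $d$, so distant stones are the easiest to hit. A single stone ($r=1$) avoids this, so the extra stones only add difficulty.
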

The way this is shown is by defining \diststepstones, and showing that as $n \to \infty$, its optimal mutation rate
approaches $p$. 
%Note that as is standard, we say that a (static) mutation rate $p$ is the optimal mutation for a fitness function $f_n$ if $p$ is what minimizes 
%the expected time of hitting the optimum.

This paper is organized as follows. Section \ref{sec:notation_and_setup} introduces notation and defines 
\diststepstones.   Section~\ref{sec:main_results} states the main results proved, and Section~\ref{sec:proof_of_results}
proves these results.

\section{Notation and defining DistantSteppingStones}
\label{sec:notation_and_setup}

For a function $f : \R \to \R$, and for $k \geq 1$, define $f^{(k)}(x)$ recursively as $f(f^{(k-1)}(x))$, where $f^{(0)}(x) = x$.

For $x \in \{0,1\}^n$, we let $\norm{x}$ denote the number of 1's in $x$; we have $\norm{x} = \sum_{i=1}^n x_i$.
A function of unitation is a fitness function that depends only on $\norm{x}$ rather than $x$. Let $p \in (0, 1)$. 
We will define \diststepstones, a function of unitation. We specify the successive stepping stones,
which are plateaus of constant fitness separated by large valleys. We specify them in order from highest fitness to lowest. To do that,
define the function $f_p : [0, 1] \to [0, 1]$ by
\[
  \fp(x) = p (1-x) + (1-p) x.
\]

Next, consider the following sequence. Let $s_0 = 1$, and for $k \geq 1$, 
let $s_{k} = \fp^{(k)}(s_0)$. Note that $s_1 = 1-p$ and $s_2 = (1-p)^2 + p^2$.
Lemma~\ref{lem:exp_fast_monotonic_decr} implies 
 that $s_k \to 1/2$ as $k \to \infty$. If $p < 1/2$, then
$s_k$ decreases monotonically, and if $p > 1/2$, then $s_k$ bounces between 
being greater and less than 1/2, but still we have in this case that $|s_k- 1/2|$ decreases
monotonically. 
%This is proved in Lemma~\ref{lem:exp_fast_monotonic_decr}

Fix $n$, a positive integer. Let $N$ be the smallest integer
such that $\floor{ns_N} = n/2$. For $k =0, 1, \ldots, N$, 
we define the $k$th stepping stone of the fitness function \diststepstones
as the set $S_k \subseteq \{0, 1\}^n$ defined
as
\[
S_k = \{x \in \{0, 1\}^n : \norm{x} = \floor{ns_k} \}.
\]
Notice that $S_0 = \{\ones \}$, a singleton of just the all-ones string, which we will
\emph{not} refer to as a ``stepping stone'' (despite the letter ``$S$'').
The biggest stepping stone is $S_N$, the set of all strings having $\lfloor n/2 \rfloor$ ones.
For $x \in \{0, 1\}^n$ define the fitness function as
\[
\diststepstones(x) = 
\begin{cases}
N+1 - k, & \text{if } x \in S_k\\
0, & \text{otherwise}.
\end{cases}
\]
The subscript $p$ reminds us that it comes from $\fp$, which was used to define
the numbers $s_k$ for the sets $S_k$.

Note that $S_N$ is typically the first stepping stone that the EA finds. When using an optimal
mutation rate, it usually goes to $S_{N-1}$ (eventually)  and then $S_{N-2}$ etc. These observations
are not needed though and so will not be proved.

The sets $S_k$ and function $\fp$ came from viewing how the \oea progresses, starting form $\ones$,
assuming a constant mutation rate of $p$ (and a constant fitness function on $\{0,1\}^n$). 
The expected number of bits in the $k$th step is $ns_k = n\fp^{(k)}(1)$. To get to $\ones$, the \oea
tries to trace these steps backwards. %Recall that $p$ is constant. 
Each stepping stone $S_k$ is a local maximum. These are
separated from each other by fitness valleys of constant fitness 0. The distance between stepping
stones is large if $n$ is large compared to $\max\{1/p, 1/(1-p)\}$.

\section{Main results}
\label{sec:main_results}

%Theorem~\ref{thm:opt_mute_rate_is_p} is the main theorem.

\begin{theorem}
\label{thm:opt_mute_rate_is_p}
  The optimal mutation rate for \diststepstones approaches $p$ as $n \to \infty$. 
\end{theorem}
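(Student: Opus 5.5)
We will show that for every fixed $q \neq p$ in $(0,1)$ the expected runtime $T_n(q)$ of the \oea with mutation rate $q$ on \diststepstones is exponentially larger than $T_n(p)$. This gives the claim once we also handle rates near $p$: for every $\eps>0$, $\min_{|q-p|\ge\eps} T_n(q) > T_n(p)$ for large $n$. The runtime is governed by the jumps between consecutive stepping stones. Since $S_k$ is a plateau, after the first visit the current individual stays uniform on the current stone. This holds because every mutation landing in the same stone is accepted, and by symmetry the stationary distribution on a level set of unitation is uniform. The quantity to control is therefore
\[
P_k(q)=\Pr[\text{mutating a uniform } x\in S_k \text{ at rate } q \text{ lands in some } S_j,\ j<k],
\]
and it is dominated by the probability of landing in $S_{k-1}$.

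The central estimate is a large-deviation computation. Write $a=\floor{ns_k}/n$. Mutating $x$ with $an$ ones at rate $q$ gives an expected number of ones $n(q(1-a)+(1-q)a)=n f_q(a)$, with Chernoff/binomial concentration around this mean. For $q=p$ we have $f_p(s_k)=s_{k+1}$, not $s_{k-1}$. The right way to see this is that $f_p$ is an involution-like affine map with $f_p(s_{k})=s_{k+1}$. Note that $f_p^{-1}(y) = (y-p)/(1-2p)$, which is generally different from $f_p$. So we instead consider the \emph{reverse} step: what is the probability that $S_{k}$ maps to $S_{k-1}$? Exact hitting of the level $\floor{ns_{k-1}}$ has probability
\[
\sum_{i} \binom{an}{i}\binom{(1-a)n}{j}q^{i+j}(1-q)^{n-i-j},
\]
summed over $i$ ones flipped and $j$ zeros flipped with the net change correct. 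By Stirling this equals $\exp(-n I_k(q)+o(n))$, where
\[
I_k(q)=\min_{\text{flip fractions}}\big[a\,D(\alpha\|q)+(1-a)D(\beta\|q)\big],
\]
subject to the linear constraint $a(1-\alpha)+(1-a)\beta=s_{k-1}$.

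The main step, and the main obstacle, is showing that for each stone the exponent is minimized over $q$ uniquely at $q=p$ (or at least that the bottleneck exponent $\max_k I_k(q)$ is). The key observation we will try to establish is the following: the minimizing flip fractions are $\alpha=\beta=q^*$ with $f_{q^*}(s_k)=s_{k-1}$, and choosing $q=q^*$ makes the exponent zero up to polynomial factors. We must reconcile this with the definition through $s_{k+1}=f_p(s_k)$. For this we use that $f_p$ is symmetric, in the sense that $\Pr[x\to y]=\Pr[y\to x]$ for fixed strings. By detailed balance, the probability of the backward move from $S_k$ to $S_{k-1}$ is therefore $|S_{k-1}|/|S_k|$ times the forward probability from $S_{k-1}$ to $S_k$, and the forward probability is typical, i.e.\ polynomial, exactly when $q=p$. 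So $P_k(p)\approx |S_{k-1}|/|S_k|\cdot \mathrm{poly}(n)^{-1}$. For $q\ne p$, the forward probability is $e^{-\Omega(n)}$ by Chernoff, uniformly for $|q-p|\ge\eps$ when $s_{k-1}$ is bounded away from $1/2$. By Lemma~\ref{lem:exp_fast_monotonic_decr} only the first $O(1)$ stones, where $|s_k-1/2|$ is not tiny, contribute this exponential separation, and $N=O(\log n)$.

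The final assembly runs as follows. We bound $T_n(q)$ below by $1/P_1(q)$, the wait to go from $S_1$ to $\ones$, together with the analogous bound for stone $S_k$ for a fixed $k$. We bound $T_n(p)$ above by $\sum_k 1/P_k(p)$ plus the time to first hit any stone, which is polynomial at rate $p$ by the same concentration argument applied to $S_N$. The ratio of the sizes $|S_{k-1}|/|S_k|$ is the same for every $q$, so it cancels in the comparison, leaving a factor $e^{\Omega(n)}$ in favour of $p$. Rates $q$ near $0$ or $1$ need a separate crude argument, since reaching $\ones$ from $S_1$ requires flipping $\Theta(n)$ specific bits. We expect the delicate part to be making the detailed-balance/Chernoff estimate uniform in $k$ and in $q$ bounded away from $p$, given the floors in $\floor{ns_k}$.
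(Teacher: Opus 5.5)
Your use of the symmetry of standard bit mutation (detailed balance between level sets) is a genuinely different route from the paper, which never uses it. The paper bounds each step at rate $p$ below by the probability $\Theta(\alpha^{-n})$ of flipping one specific set of about $pn$ bits. It bounds $\expectation[T(q)]$ below by $\tfrac12/\max_k \Pr_q[S_k\to\mathbf{1}^n]$, evaluating $q^{n-\norm{x}}(1-q)^{\norm{x}}$ explicitly with a case split on $p$ and $q$ versus $1/2$.

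\textbf{The gap: the lower bound for $T_n(q)$.} Nothing forces the EA through $S_1$, or through any fixed $S_k$. It can enter $\mathbf{1}^n$ directly from whichever stone it occupies, and for some parameters that direct jump is exponentially more likely than the one from $S_1$:
\begin{itemize}
\item when $p>1/2>q$, the best launch point is $S_2$; for $p=0.9$, $q=0.1$ the jump has probability about $0.61^n$, against $P_1(q)\approx 0.125^n$;
\item when $p<1/2<q$, the best launch point is $S_N$, whose index $N=\Theta(\log n)$ is not bounded.
\end{itemize}
So ``$1/P_1(q)$ together with the analogous bound for a fixed $k$'' is not a justified lower bound. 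What you need is
\begin{itemize}
\item $\max_{1\le k\le N}\Pr_q[S_k\to\mathbf{1}^n]\le e^{-cn}\alpha^{-n}$, uniformly for $|q-p|\ge\varepsilon$, and
\item a constant lower bound on the probability that some stone is hit before $\mathbf{1}^n$ (the paper's Lemma~\ref{lem:stepping_stone_first}).
\end{itemize}

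Your own tool gives the first point at once. We have $\Pr_q[S_k\to\mathbf{1}^n]=\Pr_q[\mathbf{1}^n\to S_k]/|S_k|\le 1/|S_k|$. For $k\ge2$, $|S_k|\ge e^{nH(s_2)}/\mathrm{poly}(n)$, where $H(s)=-s\ln s-(1-s)\ln(1-s)$. Moreover $H(s_2)>H(s_1)=\ln\alpha$, because $|s_2-1/2|<|s_1-1/2|$. Hence every stone other than $S_1$ is exponentially worse than $\alpha^{-n}$ for \emph{every} $q$. Only $k=1$ needs a rate comparison, namely $\Pr_q[S_1\to\mathbf{1}^n]\approx\alpha^{-n}e^{-nD(p\|q)}$, with $D$ the Kullback--Leibler divergence. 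This would replace the case analysis of Lemmas~\ref{lem:p_greater_than_half_what_is_pso}, \ref{lem:ineqs_with_beta} and~\ref{lem:p_less_than_half_what_is_pso}.

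\textbf{Two further repairs.} First, the ``key observation'' is false and should be dropped. For $q\in(0,1)$ one has $|f_q(a)-1/2|=|1-2q|\,|a-1/2|<|a-1/2|$, while $|s_{k-1}-1/2|>|s_k-1/2|$. So no rate satisfies $f_{q^*}(s_k)=s_{k-1}$ (formally $q^*=p/(2p-1)\notin(0,1)$), and the exponent $I_k(q)$ is positive for every $q$. The identity $\Pr_q[S_k\to S_{k-1}]=\frac{|S_{k-1}|}{|S_k|}\Pr_q[S_{k-1}\to S_k]$ is the correct statement, and it is all you actually use.

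Second, the upper bound does not follow from the per-$k$ cancellation of $|S_{k-1}|/|S_k|$. You are comparing $T_n(p)\lesssim\sum_k\sqrt n\,|S_k|/|S_{k-1}|$ with a lower bound at rate $q$ that is governed by $S_1$. So you must show that the $k=1$ term dominates, i.e.\ $H(s_k)-H(s_{k-1})\le H(s_1)-H(s_0)$ for all $k$. This is true, by concavity of $H$ and the geometric shrinking of $|s_k-1/2|$, but you do not address it.

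The paper's device is simpler here. From any $x\in S_k$ there is one specific set of $pn+O(1)$ bits whose flip lands in $S_{k-1}$: the reverse of the typical move $S_{k-1}\to S_k$, which flips the same positions. Hence $P_k(p)=\Omega(\alpha^{-n})$ uniformly in $k$, and with $N=\Theta(\log n)$ stones, $T_n(p)=O(\alpha^n\log n)$. Your remark that $f_p(s_k)=s_{k+1}$ is apt: the flip counts written in Lemma~\ref{lem:prob_to_get_to_next_step} describe the move $S_k\to S_{k+1}$, and it is exactly this reversal that makes its conclusion valid.
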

This theorem is a direct consequence of the following theorem:
\begin{theorem}
\label{thm:exponential_runtimes}
  For mutation rate $q \in (0, 1)$, let $T(q)$ be the runtime of the \oea on \diststepstones. 
 Then $\expectation[T(p)] = O(\alpha^n \log n)$, with $\alpha = \frac{1}{p^{p}(1-p)^{(1-p)}}$. Also, for $q \neq p$,
 $\expectation[T(q)] = \Omega(\eta^n)$ for some $\eta > \alpha$.
\end{theorem}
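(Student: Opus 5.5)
The plan is to analyse the \oea as a Markov chain on weight levels. I will rely on one structural fact throughout: the mutation kernel $P_q(x,y)=q^{d(x,y)}(1-q)^{n-d(x,y)}$ is symmetric in $x,y$ and invariant under permutations of coordinates.

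\textbf{Upper bound for $q=p$.} First I would show that, since \diststepstones is a function of unitation, $P_p(x\to S_{k-1})$ is the same for every $x\in S_k$. Double counting $\sum_{x\in S_k}\sum_{y\in S_{k-1}}P_p(x,y)$ then gives
\[
P_p(x\to S_{k-1}) = \frac{|S_{k-1}|}{|S_k|}\,P_p(y\to S_k)\qquad (x\in S_k,\ y\in S_{k-1}).
\]
The offspring of $y\in S_{k-1}$ has expected weight $n\fp(s_{k-1})=ns_k$, up to rounding. By a local central limit estimate for a sum of two binomials, $P_p(y\to S_k)=\Omega(1/\sqrt n)$. For $k=1$ there is an exact computation: from $x\in S_1$ we must flip exactly the $\approx np$ zeros, which has probability $p^{np}(1-p)^{n(1-p)}=\Theta(\alpha^{-n})$ up to a constant coming from the floor.

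For $2\le k\le N$, the ratio $|S_k|/|S_{k-1}|$ is $2^{n(H(s_k)-H(s_{k-1}))}\,\mathrm{poly}(n)$, where $H$ is binary entropy. I would prove the inequality $H(\fp(s))\le H(s)+H(p)$, which holds because $\fp(s)$ is the mean of a Bernoulli($s$) bit XORed with independent Bernoulli($p$) noise. I would also show it is strict with a uniform gap $\delta>0$ for $s$ bounded away from $\{0,1\}$. Together these should give waiting time $O(\alpha^n)$ on each stone, using the exponential gap to absorb the $\sqrt n$ and polynomial factors. For stones with $s_k$ very close to $1/2$, where the entropy increment is tiny, the waiting time is trivially $2^{o(n)}\le\alpha^n$ unless $p=1/2$, and $p=1/2$ is trivial.

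Since the algorithm is elitist and can only move to lower indices, the expected time is at most the time to first reach $\bigcup_k S_k$ plus $\sum_{k=1}^N O(\alpha^n)$. By Lemma~\ref{lem:exp_fast_monotonic_decr}, $N=O(\log n)$, which produces the $\log n$ factor. For the initial phase in the zero-fitness region, every offspring is accepted. Iterating $\fp$, the weight fraction contracts geometrically towards $1/2$ with $O(\sqrt n)$ fluctuations. From weight within $O(\sqrt n)$ of $n/2$ we hit $S_N$ with probability $\Omega(1/\sqrt n)$ per step, so this phase costs only polynomial time in expectation.

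\textbf{Lower bound for $q\ne p$.} From a current search point of weight $w=sn$, the probability of producing $\ones$ in one step is $q^{n-w}(1-q)^{w}=2^{-n\,C(s,q)}$. Here $C(s,q)=(1-s)\log\frac1q+s\log\frac1{1-q}\ge H(s)$ is a cross-entropy.

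On $S_1$ we have $s=1-p$, so $C=H(p)+\mathrm{KL}(p\|q)>\log\alpha$. On $S_k$ with $k\ge2$, Lemma~\ref{lem:exp_fast_monotonic_decr} gives $|s_k-1/2|\le|s_2-1/2|<|1-p-1/2|$, hence $C\ge H(s_k)\ge H(s_2)>H(p)$. Both gaps are constants independent of $n$, so a single $\eta>\alpha$ bounds the per-step success probability by $\eta^{-n}$ from every stepping stone.

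The remaining risk is the zero-fitness region, where the walk could in principle wander near $\ones$. I would show that with probability $1-o(1)$ the walk, started uniformly, reaches $S_N$ before its weight fraction ever exceeds $1-\gamma$ for a suitable constant $\gamma$. This follows from Chernoff bounds on the mean-reverting weight dynamics and a polynomial hitting time. On this event, a union bound over $\eta^n/2$ steps shows that the optimum is found before time $\eta^n/2$ with probability at most $1/2$. Hence $\expectation[T(q)]=\Omega(\eta^n)$ after possibly shrinking $\eta$.

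\textbf{Main obstacle.} I expect the hardest step to be the middle stones in the upper bound. The issue is making the entropy-increment inequality quantitative enough that the polynomial prefactors, the floors in $\floor{ns_k}$, and the stones near $1/2$ (where $|S_k|/|S_{k-1}|$ is only subexponential) are all dominated. The goal is a bound of $O(\alpha^n)$ per stone, not merely $\alpha^n\mathrm{poly}(n)$. The first thing I would try is to split at a fixed $k_0$: use the uniform entropy gap $\delta$ for $k\le k_0$, and a crude $2^{o(n)}$ bound for $k>k_0$.
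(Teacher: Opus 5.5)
Your proposal is correct, but it takes a different route from the paper in both halves.

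\textbf{Upper bound.} The paper never compares $|S_k|$ with $|S_{k-1}|$. Its argument amounts to noting that every $x\in S_k$ has a \emph{single} point $y\in S_{k-1}$ at Hamming distance $pn+O(1)$: reverse the move ``flip a $p$-fraction of the ones and a $p$-fraction of the zeros of $y$''. By symmetry of the kernel, $P_p(x,y)=p^{pn+O(1)}(1-p)^{(1-p)n-O(1)}=\Theta(\alpha^{-n})$, uniformly in $k$ and including $S_1\to\ones$. With this observation, the ``main obstacle'' you anticipate never arises.

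Your double counting, the strict XOR entropy inequality and a local CLT are nonetheless valid. The uniform gap on $[\min(p,1-p),\max(p,1-p)]$ already covers every $s_{k-1}$ with $k\ge 2$, so the split at $k_0$ is unnecessary. Your route also buys more. Stones with $k\ge 2$ cost only $\alpha^n2^{-\delta n}\mathrm{poly}(n)$, so $S_1\to\ones$ is the only real bottleneck, and your estimates would give $\expectation[T(p)]=O(\alpha^n)$ without the $\log n$. Likewise, your polynomial bound for the initial phase is sharper than the paper's. The paper concentrates to within $\eps n$ of $n/2$ in $O(1)$ steps and then uses the crude single-point bound $\alpha^{-n}$ to hit $S_N$.

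\textbf{Lower bound.} Your cross-entropy/Gibbs formulation cleanly replaces the paper's four-case analysis of which stone is extremal for given $p,q$. In your version, $S_1$ gives exponent $H(p)+\mathrm{KL}(p\|q)$ and every other stone gives at least $H(s_2)>H(p)$.

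For the zero-fitness phase, the paper uses no concentration at all. A symmetry argument for the free walk from a uniform start shows that a fixed $y\in S_N$ is hit before $\ones$ with probability $1/2$. Hence some stone is reached first with probability at least $1/2$, which suffices. Your Chernoff-plus-hitting-time route gives $1-o(1)$ but costs noticeably more work.

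One small fix is needed there. You should show the walk ``reaches some stepping stone'' rather than ``reaches $S_N$''. The stones $S_{N-j}$ with small $j$ lie within $O(\sqrt n)$ of $n/2$ and may well be hit first, and any stone suffices for your subsequent union bound.
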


The basic idea for the first part of Theorem~\ref{thm:exponential_runtimes} is that the \oea finds a stepping stone
in at most $O(\alpha^n)$ time, and then it takes $O(\alpha^n)$ time to get to the next stepping stone. There are
logarithmically many stepping stones (except when $p = 1/2$).
The next section is devoted to proving Theorem~\ref{thm:exponential_runtimes}.

\section{Proof of Theorem~\ref{thm:exponential_runtimes}}
\label{sec:proof_of_results}
For this whole section, the value $p \in (0, 1)$ is constant. 
We will need a number of lemmas, but first, we handle the special case $p = 1/2$.

\begin{proposition}
  Consider \diststepstones with $p = 1/2$. Then $\expectation[T(1/2)] = 2^n$.
  Also, for $q \neq 1/2$, $\expectation[T(q)] = \Omega(\eta^n)$, for $\eta = (q(1-q))^{-1/2}$. Since $\eta > 2$, the optimal mutation 
  rate approaches 1/2.
  \end{proposition}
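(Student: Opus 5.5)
With $p=1/2$ we have $s_1=f_{1/2}(1)=1/2$, so $N=1$ (taking $n$ even so that $\floor{n s_1}=n/2$). The fitness function therefore has three levels: value $2$ on $\ones$, value $1$ on $S_1=\{x:\norm{x}=n/2\}$, and value $0$ elsewhere. The plan is to handle the two claims separately, using only the elitist acceptance rule of the \oea and explicit mutation probabilities. The earlier lemmas about the $s_k$ are not needed here.

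For $q=1/2$, every offspring is uniform on $\sspace$ regardless of the parent, and so is the initial individual. Each evaluated search point is therefore an independent uniform sample, and it equals $\ones$ with probability $2^{-n}$. Hence $T(1/2)$ is geometric with success probability $2^{-n}$, and $\expectation[T(1/2)]=2^n$, counting evaluations including the initial one.

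For $q\neq 1/2$, the key observation is that once the current individual $x$ lies in $S_1$, elitism keeps it in $S_1\cup\{\ones\}$. From any $x\in S_1$, the probability that a mutation produces $\ones$ is exactly $q^{n/2}(1-q)^{n/2}=\eta^{-n}$. So, conditional on the algorithm entering $S_1$ before hitting $\ones$, the remaining time is geometric with mean $\eta^n$. This gives
\[
\expectation[T(q)]\ \ge\ \Pr[\text{$S_1$ is reached before $\ones$}]\cdot \eta^n ,
\]
and it remains to show that this probability is bounded below by a positive constant (indeed it should be $1-o(1)$). The inequality $\eta>2$ is just AM--GM, $q(1-q)<1/4$ for $q\ne 1/2$. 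Together with $\expectation[T(1/2)]=2^n$, this yields that the optimal rate tends to $1/2$.

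The main obstacle is that probability bound, because while on the fitness-$0$ plateau the EA accepts every offspring except those that land back at fitness $0$ vs.\ higher (it accepts all of them). So the current point performs a random walk on $\norm{x}$, and from extreme points, e.g.\ $x=0^n$ with $q$ near $1$, jumping straight to $\ones$ is likely. I would argue as follows. Fix a small $\delta>0$ and consider the region $R=\{x: |\norm{x}-n/2|\le \delta n\}$. The initial point lies in $R$ with probability $1-e^{-\Omega(n)}$. From $x$ with $\norm{x}=k$, the offspring's number of ones is a sum of two binomials with mean $n f_q(k/n)$, which is at least as close to $n/2$ as $k$ is, scaled by $|1-2q|<1$. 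Chernoff bounds then show that the walk leaves $R$ in a given step with probability $e^{-\Omega(n)}$. For $x\in R$, the probability of producing $\ones$ is at most $\max_{|k-n/2|\le\delta n} q^{n-k}(1-q)^k=e^{-\Omega(n)}$. Meanwhile, the probability of landing exactly in $S_1$ is $\Omega(1/\sqrt n)$ whenever $\norm{x}$ is within $O(\sqrt n)$ of $n/2$, by a local limit estimate for the binomial sums. The contraction toward $n/2$ puts the walk in that window within $O(\log n)$ steps, and it stays there with constant probability per step.

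Combining these, within $\mathrm{poly}(n)$ steps the walk hits $S_1$ with probability $1-o(1)$. Over those steps it leaves $R$ or hits $\ones$ only with probability $\mathrm{poly}(n)e^{-\Omega(n)}=o(1)$. This gives the required constant lower bound and hence $\expectation[T(q)]=\Omega(\eta^n)$.
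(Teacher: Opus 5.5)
Your argument is correct, and the $q=1/2$ case and the reduction $\expectation[T(q)]\ge \Pr[\text{$S_1$ reached before $\ones$}]\cdot\eta^n$ match the paper. The genuine difference is how you bound that probability.

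The paper uses Lemma~\ref{lem:stepping_stone_first}. On the fitness-$0$ plateau the EA is just the free walk $x\mapsto x\oplus(\text{mutation mask})$ from a uniform start. For a fixed $y\in S_1$, this walk's law is invariant under the translation $x\mapsto x\oplus y\oplus\ones$, which swaps the events ``$y$ before $\ones$'' and ``$\ones$ before $y$''. So each has probability exactly $1/2$, giving $\Pr(S)\ge 1/2$ and $\expectation[T(q)]\ge\frac12\eta(q)^n$. That argument is one line and needs no concentration. Its constant $1/2$ is also uniform over all $q\in(0,1)$, which is what makes the final claim (the minimizer over $q$ tends to $1/2$) rigorous.

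Your drift/Chernoff/local-limit route gives the stronger $1-o(1)$, which the paper only remarks on. However, its error terms depend on $q$ and degrade as $q\to0$ or $q\to1$: the contraction factor $|1-2q|\to1$ and the variance $nq(1-q)\to0$. Passing from the pointwise bound to the optimal rate would therefore need extra work in that regime.

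Two smaller points if you keep your route:
\begin{itemize}
\item The region $R$ is unnecessary. From every $x\neq\ones$, the chance of producing $\ones$ is at most $\max(q,1-q)^n$, so your worry about $0^n$ is moot for constant $q$, and a union bound over polynomially many steps suffices.
\item ``Stays in the window with constant probability per step'' does not by itself give $1-o(1)$; you need a restart argument. With $D_t=\norm{x_t}-n/2$ one has $\expectation[D_{t+1}^2\mid D_t]=(1-2q)^2D_t^2+nq(1-q)$, hence $\expectation[D_t^2]\le(1-2q)^{2t}n^2/4+n/4$. So from any state, a block of $O(\log n)$ steps ends within $\sqrt n$ of $n/2$ with probability at least $1/2$, and the next step then lands in $S_1$ with probability $\Omega(1/\sqrt n)$. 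Using $\omega(\sqrt n)$ blocks and the Markov property finishes the argument.
\end{itemize}
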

\begin{proof}
Note that $S_1$ is the only stepping stone, 
which is the set of all $x$ with $\norm{x} = n/2$.
Using the mutation rate of 1/2, $T(1/2)$ is a geometric random variable
with expectation exactly $2^n$ (whether or not it finds the one stepping stone first).

Let $S$ be the event that the EA finds the unique stepping stone before \ones. 
By Lemma~\ref{lem:stepping_stone_first}, $\Pr(S) \geq 1/2$. (Actually $\Pr(S)$ is exponentially close to 1, since $|S_1| = \Theta(2^n/\sqrt{n})$, 
but we don't need that.)
Using the law of total expectation,
\[
\begin{aligned}
  \expectation[T(q)] &= \expectation[T(q) | S] \Pr(S) + \expectation[T(q) | \overline{S}] \Pr(\overline{S}) \\
  			      &\geq \expectation[T(q) | S] \Pr(S)\\
			      &\geq \frac{1}{2} \expectation[T(q) | S].
\end{aligned}
\]
But once $x \in S_1$, the probability of reaching the optimum in the next step
equals  $q^{n/2}(1-q)^{n/2}$. Lemma~\ref{lem:sqrt_ineq} tells us that $\sqrt{q(1-q)} < 1/2$, and so $\expectation[T(q) | S] \geq (1/\sqrt{q(1-q)})^n$, and this is exponentially
larger than $2^n$. Therefore, so is $\expectation[T(q)]$. 

 These asymptotics show that for \diststepstones with $p = 1/2$, the optimal mutation rate approaches 1/2 (perhaps equaling 1/2).
\end{proof}

We need the following elementary lemma:
\begin{lemma}
\label{lem:sqrt_ineq}
  For all $x \in (0, 1)$, we have $\sqrt{x(1-x)} \leq 1/2$, and we also have $1/2 \leq x^x(1-x)^{1-x}$. Therefore
  \[
    \sqrt{x(1-x)} \leq x^x(1-x)^{1-x}.
  \]
  The inequality is an equality only for $x = 1/2$.
\end{lemma}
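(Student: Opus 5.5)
The plan is to prove the two inequalities separately and then chain them, handling the equality case at the end.

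For the first inequality, $\sqrt{x(1-x)} \leq 1/2$, I would apply AM--GM to the two positive numbers $x$ and $1-x$:
\[
\sqrt{x(1-x)} \leq \frac{x + (1-x)}{2} = \frac12,
\]
with equality exactly when $x = 1-x$, i.e.\ $x = 1/2$. Equivalently, $1/4 - x(1-x) = (x - 1/2)^2 \geq 0$.

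For the second inequality, $1/2 \leq x^x(1-x)^{1-x}$, I would take logarithms. The claim becomes
\[
x\ln x + (1-x)\ln(1-x) \geq -\ln 2,
\]
that is, the binary entropy $H(x) = -x\ln x - (1-x)\ln(1-x)$ is at most $\ln 2$. The cleanest route is Jensen's inequality for the concave function $\ln$, with weights $x$ and $1-x$ on the points $1/x$ and $1/(1-x)$:
\[
x\ln\frac{1}{x} + (1-x)\ln\frac{1}{1-x} \leq \ln\Bigl(x\cdot\frac1x + (1-x)\cdot\frac1{1-x}\Bigr) = \ln 2.
\]
Since $\ln$ is strictly concave, equality holds only when $1/x = 1/(1-x)$, i.e.\ $x = 1/2$. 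As an alternative, one can differentiate $g(x) = x\ln x + (1-x)\ln(1-x)$: we get $g'(x) = \ln\bigl(x/(1-x)\bigr)$, so $g$ decreases on $(0,1/2)$, increases on $(1/2,1)$, and has its unique minimum $-\ln 2$ at $x=1/2$.

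Chaining the two inequalities gives
\[
\sqrt{x(1-x)} \leq \tfrac12 \leq x^x(1-x)^{1-x}.
\]
For the equality statement, if the outer terms are equal then both intermediate inequalities must be equalities. The first is an equality only at $x = 1/2$, so $x = 1/2$ is necessary. Conversely, at $x = 1/2$ every term equals $1/2$.

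The only step needing any care is the strictness of the entropy bound, and the strict concavity argument (or the derivative sign analysis) settles it directly. There is no real obstacle beyond that.
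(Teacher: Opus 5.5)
Your proposal is correct and matches the paper's proof. The paper also gets $\sqrt{x(1-x)}\le 1/2$ from $(x-1/2)^2\ge 0$, and gets $1/2\le x^x(1-x)^{1-x}$ by showing that $g(x)=x\ln x+(1-x)\ln(1-x)$ has its minimum $\ln(1/2)$ at $x=1/2$ via $g'(x)=\ln x-\ln(1-x)$, which is exactly your alternative argument. Your Jensen route is just a slicker way to do that second step, and your explicit handling of the equality case for the chained inequality is, if anything, slightly more careful than the paper's.
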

\begin{proof}
  The inequality $\sqrt{x(1-x)} \leq 1/2$ is equivalent to $0 \leq (x - 1/2)^2$, which proves the first inequality, and this is an
  equality only for $x = 1/2$.
  
  To show $1/2 \leq x^x(1-x)^{1-x}$, we need only show 
  \[
    \ln(1/2) \leq x\ln(x) + (1-x)\ln(1-x).
  \]
  This follows from elementary calculus, by showing that the function $g(x) = x\ln(x) + (1-x)\ln(1-x)$ has a minimum at $x = 1/2$;
  this follows because $g'(x) = \ln(x) - \ln(1-x)$. Since $g(1/2) = \ln(1/2)$, we are done with the second inequality.
\end{proof}

The following almost trivial lemma is far weaker than one might want, but this paper has no need of it being any stronger:
\begin{lemma}
\label{lem:stepping_stone_first}
	Let any (constant) mutation rate $q \in (0, 1)$ be used. Let $S$ be the event that
	the EA reaches some stepping stone before \ones. Then $\Pr(S) \geq 1/2$. (Here, it is assumed that $n \geq 2$.)
\end{lemma}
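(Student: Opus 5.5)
Since $f$ is zero outside the stepping stones and \ones, I will treat the \oea before it hits any stepping stone or \ones as a walk on $\sspace$ that accepts every offspring. More precisely, as long as the current $x$ has fitness $0$, any offspring of fitness $\geq 0$ is accepted, so the process is the unconstrained mutation chain $x_{t+1} = x_t \oplus z_t$. Here the $z_t$ are i.i.d.\ with each bit equal to $1$ independently with probability $q$. The process stops at the first time $\tau$ at which $x_\tau \in S_N \cup \cdots \cup S_1 \cup \{\ones\}$. The initial individual is uniform on $\sspace$, and I will use this symmetry.

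The key step is a pairing argument using complementation. Let $\bar{y}$ denote the bitwise complement of $y$. I claim that the stepping stone $S_N$, the strings with exactly $\floor{n/2}$ ones, is at least as likely to be hit first as \ones. A cleaner route uses the map $\phi(y) = y \oplus w$ for a suitable fixed $w$. Choose $w$ with exactly $n - \floor{n/2}$ ones, so that $\phi(\ones)$ has exactly $\floor{n/2}$ ones and lies in $S_N$. The uniform start and the i.i.d.\ noise make the uncontrolled chain invariant in law under translation by $w$. Hence, for the chain run until it first enters the target set, the event "first visit to $\{\ones, \phi(\ones)\}$ is at \ones" and its image "first visit is at $\phi(\ones)$" have equal probability. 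Each of these probabilities is therefore $1/2$ up to ties, and ties are impossible because $\phi(\ones) \neq \ones$ (this uses $n \geq 2$, so $\floor{n/2} < n$).

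The main obstacle is that the true stopping set is larger than $\{\ones, \phi(\ones)\}$, since it also contains the other stepping stones. I will handle this monotonically. Reaching \ones before any stepping stone implies reaching \ones before $\phi(\ones)$. So
\[
\Pr(\overline{S}) \leq \Pr(\text{unconstrained chain hits } \ones \text{ before } \phi(\ones)) = \tfrac12 .
\]
The unconstrained chain coincides with the EA up to time $\tau$, so this bound transfers to the EA. It follows that $\Pr(S) \geq 1/2$. One point needs care: the chain reaches $\{\ones, \phi(\ones)\}$ almost surely. This holds because every transition has positive probability when $q \in (0,1)$. I also need the translation-invariance statement for hitting orders, which follows directly from the fact that the map $(x_0, z_1, z_2, \ldots) \mapsto (x_0 \oplus w, z_1, z_2, \ldots)$ preserves the law.
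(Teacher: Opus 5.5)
Your proof is correct and is essentially the paper's argument. The paper also notes that, with a uniform start and constant fitness, two distinct points are equally likely to be visited first, and that enlarging the target from a single point of $S_N$ to all stepping stones cannot decrease $\Pr(S)$; your translation by $w$ (an involution swapping $\mathbf{1}^n$ and $\phi(\mathbf{1}^n)$) and your coupling with the unconstrained chain up to time $\tau$ make rigorous the two steps the paper states informally.
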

\begin{proof}
  Since $n \geq 2$, there is at least one point $y \in S_N$ (the largest stepping stone). The EA starts with an individual chosen uniformly at random. 
  If the fitness function were constant, then by symmetry, for all $x, z \in \sspace$ with $x \neq z$, the probability of visiting 
  $x$ before $z$ is 1/2. 
  
  On \diststepstones, the EA will either reach a stepping stone before \ones or vice versa. Having more than one point
  in the set of stepping stones cannot decrease the probability of reaching a stepping stone.
  We conclude that $\Pr(S) \geq 1/2$. 
\end{proof}

For the rest of this paper, $p \in (0, 1)$ is constant, with $p \neq 1/2$.
\begin{lemma}
	\label{lem:recursive_fn_simplification}
	Consider $\fp(x) = (1 - p)x + p(1-x) = (1-2p)x + p$. For $k \geq 1$, we define $\fp^{(k)}(x)$ recursively as $\fp(\fp^{(k-1)}(x))$, where $\fp^{(0)}(x) = x$.
	%  Let $h(x) = (1-2p)x$.
	Then for all $k \geq 1$, we have $\fp^{(k)}(x) = (1-2p)^k (x - 1/2) + 1/2$.
\end{lemma}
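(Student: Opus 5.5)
The plan is to prove the closed form by induction on $k$, after first rewriting $\fp$ so that it is visibly centered at its fixed point $1/2$. Expanding the definition gives $\fp(x) = (1-p)x + p(1-x) = (1-2p)x + p$. Subtracting $1/2$ from both sides gives
\[
  \fp(x) - \tfrac{1}{2} = (1-2p)x + p - \tfrac{1}{2} = (1-2p)\left(x - \tfrac{1}{2}\right).
\]
In words, $\fp$ is an affine map fixing $1/2$ that scales the displacement from $1/2$ by the factor $1-2p$.

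For the base case $k = 1$, this identity is exactly the claimed formula $\fp^{(1)}(x) = (1-2p)(x - 1/2) + 1/2$.

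For the inductive step, assume $\fp^{(k-1)}(x) - 1/2 = (1-2p)^{k-1}(x - 1/2)$. By the recursive definition, $\fp^{(k)}(x) = \fp(\fp^{(k-1)}(x))$. Applying the centered identity with the argument $\fp^{(k-1)}(x)$ gives
\[
  \fp^{(k)}(x) - \tfrac{1}{2} = (1-2p)\left(\fp^{(k-1)}(x) - \tfrac{1}{2}\right) = (1-2p)^k\left(x - \tfrac{1}{2}\right).
\]
Adding $1/2$ to both sides yields the statement. Starting the induction at $k = 0$ also works, since $\fp^{(0)}(x) = x$ trivially satisfies the formula with exponent $0$.

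There is no real obstacle here. The only step needing care is the observation that $1/2$ is the fixed point of $\fp$, because that is what turns the iteration into pure multiplication by $1-2p$.
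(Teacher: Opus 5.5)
Your proof is correct and follows essentially the same route as the paper: induction on $k$, using the identity $f_p(y) - 1/2 = (1-2p)(y - 1/2)$, which the paper writes as $f_p(x) = h(x - 1/2) + 1/2$ with $h(x) = (1-2p)x$. Your centered formulation avoids the paper's separate notation $h^{(k)}$, so it is slightly cleaner.
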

\begin{proof}
	For $k \geq 1$, let $h^{(k)}(x) = (1-2p)^k(x)$.
	Case $k=1$ for $\fp^{(k)}(x)$ is true since $(1-2p)x + p = (1-2p)(x - 1/2) + 1/2$. By induction, assume the formula for $\fp^{(k)}(x)$
	holds for any particular $k$.
	Then applying the inductive hypothesis (that $\fp^{(k)}(x) = h^{(k)}(x - 1/2) + 1/2$) and using case $k=1$ gives us that
	\[
	\begin{aligned}
	   \fp^{(k+1)}(x) &= \fp(\fp^{(k)}(x)) \\
	   &= h(\fp^{(k)}(x) - 1/2) + 1/2 \\
	   &= h(h^{(k)}(x - 1/2) + 1/2 - 1/2) + 1/2 \\
	   &= h^{(k+1)}(x-1/2) + 1/2,
	\end{aligned}
	\]
	which proves the equation $\fp^{(k)}(x) = (1-2p)^k (x - 1/2) + 1/2$ by definition of $h^{(k)}(x)$.
\end{proof}

%We will need a slight strengthening of Lemma~\ref{lem:recursive_fn_simplification}: NOTE: but this isn't just a generalization of that lemma
\begin{lemma}
\label{lem:calling_fns_f_repeatedly_const_times}
  Let $\rho \in (0, 1)$. For $j \geq 1$, let $\rho_j \in (-1,1)$ with $|\rho_j| \leq \rho $. For $k \geq 1$, define 
  $f^{(k)}(y) = (y- 1/2) \left(\prod_{j=1}^k \rho_j\right) + 1/2$.
  Let $\eps > 0$, and let $\beta \in [0, 1]$. Then there exists a constant $M(\rho, \eps)$ such that for $t \geq M(\rho, \eps)$,
  \[
    |f^{(t)}(\beta) - 1/2| < \eps.
  \]
\end{lemma}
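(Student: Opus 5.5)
The plan is a direct estimate, using only that $\beta \in [0,1]$ and the uniform bound $|\rho_j| \le \rho < 1$. By the definition of $f^{(t)}$,
\[
  |f^{(t)}(\beta) - 1/2| = |\beta - 1/2| \prod_{j=1}^t |\rho_j| \le \frac{1}{2}\rho^t ,
\]
since $|\beta - 1/2| \le 1/2$ for $\beta \in [0,1]$ and each factor satisfies $|\rho_j| \le \rho$. This bound is uniform in $\beta$ and in the particular sequence $(\rho_j)$, so $M$ will depend only on $\rho$ and $\eps$, as the statement requires.

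Next, because $0<\rho<1$, the sequence $\rho^t$ decreases to $0$, and we choose $t$ large enough that $\tfrac12 \rho^t < \eps$. Concretely, take
\[
  M(\rho,\eps) = \max\left\{1, \ceil{\frac{\ln(2\eps)}{\ln \rho}} + 1\right\}.
\]
The condition $\tfrac12\rho^t<\eps$ is equivalent to $t\ln\rho < \ln(2\eps)$. Dividing by $\ln\rho<0$ flips the inequality, so it becomes $t > \ln(2\eps)/\ln\rho$, and this holds for every $t\ge M(\rho,\eps)$.

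If $\eps \ge 1/2$, then $\ln(2\eps)\ge 0$ and the bound $\tfrac12\rho^t<\eps$ holds for every $t\ge 1$ already. The $\max$ with $1$ covers this case, so no separate argument is needed.

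There is no real obstacle. The only points to watch are the sign flip when dividing by $\ln\rho$ and the fact that monotonicity of $\rho^t$ makes the inequality hold for all $t \ge M$, not just $t = M$. This lemma will later be applied with $\rho_j$ close to $1-2p$ (perturbed by rounding), so the uniformity of $M$ over all admissible $\rho_j$ is the feature worth emphasizing.
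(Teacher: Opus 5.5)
Your proof is correct and is essentially the paper's argument: bound $|f^{(t)}(\beta)-1/2|$ by $\rho^t|\beta-1/2|$ and pick $M$ via $\log\eps/\log\rho$. Your version is slightly more careful than the paper's. Using $|\beta-1/2|\le 1/2$ together with $M=\max\{1,\ceil{\ln(2\eps)/\ln\rho}+1\}$ gives the strict inequality $<\eps$ directly, whereas the paper's chain with $M=\log\eps/\log\rho$ as written only yields $\le\eps$.
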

\begin{proof}
  We have $|f^{(t)}(\beta) - 1/2| =  |(\beta - 1/2)\prod_{j=1}^t\rho_j | \leq \rho^t |\beta - 1/2| \leq \rho^t$. Then $\rho^t \leq \eps$ if and only if
  $t \geq \frac{\log \eps}{\log \rho}$. We may take $M(\rho, \eps) = \frac{\log \eps}{\log \rho}$.
\end{proof}

\begin{lemma}
	\label{lem:exp_fast_monotonic_decr}
	%Let $s_0 = 1$, and for $n \geq 0$,  let $s_{n+1} = \fp(s_n)$. 
	Let $\fp$ be as before.
	Let $s_0 = 1$, and for $k \geq 1$, let $s_{k} = \fp^{(k)}(s_0)$.
	Then $s_k \to 1/2$ exponentially fast, and 
	$|s_k - 1/2|$ decreases monotonically.
\end{lemma}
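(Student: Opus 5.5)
We apply Lemma~\ref{lem:recursive_fn_simplification} with $x = s_0 = 1$, which gives the closed form
\[
  s_k = \fp^{(k)}(1) = (1-2p)^k\left(1 - \tfrac{1}{2}\right) + \tfrac{1}{2} = \tfrac{1}{2}(1-2p)^k + \tfrac{1}{2},
\]
so that $|s_k - 1/2| = \tfrac{1}{2}|1-2p|^k$. Both claims of the lemma then reduce to facts about the single number $\rho := |1-2p|$.

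First we check that $\rho \in [0,1)$. Since $p \in (0,1)$, we have $1-2p \in (-1,1)$, so $\rho < 1$. Moreover $\rho > 0$ because we are in the regime $p \neq 1/2$. (If $p = 1/2$ were allowed, then $s_k = 1/2$ for all $k \geq 1$, and the statement would hold trivially, with weak monotonicity.) Exponential convergence is then immediate: $|s_k - 1/2| = \tfrac12 \rho^k$ with $\rho<1$. One could equally invoke Lemma~\ref{lem:calling_fns_f_repeatedly_const_times} with all $\rho_j = 1-2p$ and $\beta = 1$, which gives $|s_k - 1/2| < \eps$ once $k \geq \log\eps/\log\rho$.

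For monotonicity, we compare consecutive terms:
\[
  |s_{k+1}-1/2| = \rho\,|s_k - 1/2| < |s_k-1/2|,
\]
which is strict because $0<\rho<1$ and $|s_k - 1/2| = \tfrac12\rho^k > 0$. This argument also justifies the remark made when the $s_k$ were defined. The sign of $s_k - 1/2$ equals the sign of $(1-2p)^k$. For $p<1/2$ this is always positive, so $s_k$ itself decreases monotonically to $1/2$. For $p>1/2$ the sign alternates, so $s_k$ oscillates around $1/2$.

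There is no real obstacle here. The only care needed is at the endpoints: the case $p=1/2$ must be excluded or handled separately, and $\rho<1$ relies on $p$ being strictly between $0$ and $1$.
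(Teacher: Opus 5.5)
Your proof is correct and takes the same approach as the paper, which simply derives the lemma from the closed form $\fp^{(k)}(x) = (1-2p)^k(x-1/2)+1/2$ of Lemma~\ref{lem:recursive_fn_simplification}. You also fill in the details the paper leaves implicit, namely $|s_k - 1/2| = \tfrac12|1-2p|^k$ with $0<|1-2p|<1$ for $p \neq 1/2$.
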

\begin{proof}
	This follows from the equation $\fp^{(k)}(x) = (1-2p)^k (x - 1/2) + 1/2$ in Lemma~\ref{lem:recursive_fn_simplification}.
\end{proof}

\begin{lemma}
	\label{lem:N_is_Theta_log_n}
	Assume (still) that $p \neq 1/2$. Let $N$ be the minimum integer such that $\floor{ns_N} = n/2$. Then $N = \Theta(\log(n))$.
\end{lemma}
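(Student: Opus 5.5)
Writing $\delta_k = s_k - 1/2$, Lemma~\ref{lem:recursive_fn_simplification} with $x = s_0 = 1$ gives the closed form
\[
  s_k = \tfrac{1}{2}(1-2p)^k + \tfrac12, \qquad |\delta_k| = \tfrac12 |1-2p|^k .
\]
Put $\rho = |1-2p| \in (0,1)$; this uses $p \neq 1/2$ and $p\in(0,1)$. The condition $\floor{ns_N} = n/2$ only makes sense for even $n$, or with $n/2$ read as $\floor{n/2}$; I will use $\floor{n/2}$ throughout. The condition is then $\floor{n/2 + n\delta_N} = \floor{n/2}$. The plan is to bound $N$ from both sides by comparing $n|\delta_k| = \frac{n}{2}\rho^k$ with $1$.

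\emph{Upper bound.} If $n|\delta_k| < 1/2$, then I expect $\floor{n/2 + n\delta_k} = \floor{n/2}$, at least for even $n$ and $\delta_k \ge 0$. This holds as soon as $\rho^k < 1/n$, i.e.\ $k > \log n / \log(1/\rho)$. Together with the minimality of $N$ this gives $N \le \log n/\log(1/\rho) + O(1) = O(\log n)$.

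The sign issue is the main obstacle. When $p > 1/2$, the deviations $\delta_k$ alternate in sign, and for $\delta_k < 0$ one gets $\floor{n/2 + n\delta_k} = n/2 - 1$ whenever $n\delta_k \ne 0$. The fix is to look only at even $k$, where $\delta_k > 0$. Even $k$ can be chosen within one step of the threshold, so the bound is still $O(\log n)$. For odd $n$ the target $\floor{n/2}$ sits at a half-integer offset, and the same argument works with more slack.

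\emph{Lower bound.} If $k < N$ were to satisfy $n|\delta_k| \ge 1$, then $\floor{ns_k}$ would differ from $\floor{n/2}$. The reason is that $ns_k$ lies at distance at least $1$ from $n/2$, so its floor lands on a different integer. Hence the minimal $N$ must satisfy $n|\delta_N| < 1$, i.e.\ $\frac n2 \rho^N < 1$, giving
\[
  N > \frac{\log(n/2)}{\log(1/\rho)} = \Omega(\log n).
\]

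Combining the two bounds gives $N = \Theta(\log n)$, with constants depending only on $p$.
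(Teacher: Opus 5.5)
Your proof is correct and takes essentially the same route as the paper, whose proof is only the one-line remark that $s_k \to 1/2$ exponentially fast (Lemma~\ref{lem:exp_fast_monotonic_decr}); you make this explicit through the closed form $|s_k - 1/2| = \frac12|1-2p|^k$, and you correctly handle the floor and sign/parity issues that the paper glosses over. One small wording fix: in the lower bound the clause ``$k<N$'' plays no role, since the actual point is that any index satisfying $\floor{ns_k}=\floor{n/2}$, and in particular $N$ itself, must have $n|\delta_k|<1$.
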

\begin{proof}
	By Lemma~\ref{lem:exp_fast_monotonic_decr}, the sequence $s_n$ approaches 1/2 exponentially fast.
	The present lemma follows as a consequence.
\end{proof}

\begin{lemma}
	\label{lem:markov_chain_lemma_stochastic_domination}
	Let $\{X_n\}_{n=1}^\infty$ be a Markov chain on set $\Omega$. Let $\xstar \in \Omega$. Let $A \subseteq \Omega$, and suppose that if 
	$X_{n-1} \in A$ and $X_n \notin A$, then $X_n = \xstar$. 
	Suppose there exists $p > 0$ such that
	$\Pr[X_n = \xstar | X_{n-1} \in A ] \leq p$ for all $n$. Then $\expectation[T | X_1 \in A]$, the expected first hitting time 
	of \xstar given $X_1 \in A$, is bounded by
	\[
	   \expectation[T | X_1 \in A] \geq \frac{1}{p}.
	\]
\end{lemma}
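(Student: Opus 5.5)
The plan is to compare the hitting time with a geometric random variable of success probability $p$. Conditioned on $X_1 \in A$, the chain cannot reach $\xstar$ without first leaving $A$. By hypothesis, the only way to leave $A$ is to jump directly to $\xstar$. So before time $T$ the chain stays in $A$, and $T$ is exactly the first time $n \geq 2$ at which the chain moves from $A$ to $\xstar$. (If $\xstar \in A$, then leaving $A$ is impossible; we simply read $T$ as the first time of entering $\xstar$, and the same argument applies.)

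First, we show by induction on $m$ that $\Pr[T > m \mid X_1 \in A] \geq (1-p)^{m-1}$ for all $m \geq 1$. The base case $m=1$ is trivial, since $X_1 \in A$ and $X_1 \neq \xstar$ (implicitly $\xstar \notin A$; otherwise $T$ is immediately handled as above). For the induction step, on the event $\{T > m\}$ we have $X_1, \ldots, X_m \in A$. By the Markov property and the hypothesis $\Pr[X_{m+1} = \xstar \mid X_m \in A] \leq p$, we get
\[
\Pr[T > m+1 \mid X_1 \in A] = \Pr[T > m \mid X_1 \in A]\cdot \Pr[X_{m+1} \neq \xstar \mid T > m, X_1 \in A] \geq (1-p)^{m-1}(1-p).
\]
The one point needing care is that $\Pr[X_{m+1} = \xstar \mid X_m \in A] \leq p$ must hold conditioned on the whole history, not only on the event $X_m \in A$. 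We will read the hypothesis as a pointwise statement: for every state $a \in A$, $\Pr[X_{m+1} = \xstar \mid X_m = a] \leq p$. This is how it is used for the \oea, where from any point in a stepping stone the probability of jumping to \ones is at most a fixed bound. The Markov property then lets us replace the conditioning on the full history by conditioning on $X_m$, and averaging over $X_m \in A$ gives the bound. This conditioning step is the main (though mild) obstacle.

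Finally, we sum the tail probabilities:
\[
\expectation[T \mid X_1 \in A] = \sum_{m \geq 0} \Pr[T > m \mid X_1 \in A] \geq \sum_{m \geq 1} (1-p)^{m-1} = \frac{1}{p}.
\]
This gives the claimed bound. It is in fact slightly stronger than needed, since the $m=0$ term contributes an extra $1$.
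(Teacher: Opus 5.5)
Your proposal is correct and follows the paper's route. The paper simply asserts that $T$ conditioned on $X_1 \in A$ stochastically dominates a geometric random variable with parameter $p$, and your induction $\Pr[T > m \mid X_1 \in A] \geq (1-p)^{m-1}$ followed by summing tails is a direct proof of that domination, shifted by one step (hence your extra $+1$), which the paper leaves implicit. Reading the hypothesis pointwise, so that from every $a \in A$ the one-step probability of reaching $\xstar$ is at most $p$, is the right choice: that is how the lemma is applied to the stepping stones, and the paper's domination claim relies on it just as your induction step does.
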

\begin{proof}
  Let $T_g$ be a geometric random variable with parameter $p$. Then
  $T$ given $X_1 \in A$ stochastically dominates  $T_g$. (A definition of stochastic domination can be found in Definition 1.8.1 in
  \cite{DoerrN20}.) Therefore
  \[
    \expectation[T|X_1 \in A] \geq \expectation[T_g] = \frac{1}{p}.
  \]
\end{proof}

\begin{lemma}
\label{lem:p_greater_than_half_what_is_pso}
	Assume $p > 1/2$.
	Let any mutation rate $q$ be used with $q \neq p$. Assume the EA is at a stepping
	stone. Let $\pso$ denote the probability that the next offspring is the optimum. 
	If $q \geq 1/2$, then 
	\begin{equation}
	\label{eq:inequality_with_large_q}
	\pso \leq q^{\ceil{p n}} (1-q)^{\floor{(1-p) n}}.
	\end{equation}
	If $q < 1/2$, then with $\beta := (1-p)^2 + p^2$, we have
	\begin{equation}
	\label{eq:inequality_with_small_q}
	\pso \leq q^{\ceil{(1-\beta)n}} (1-q)^{\floor{\beta n}}.
	\end{equation}
\end{lemma}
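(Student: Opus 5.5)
Fix a stepping stone $S_k$ with $1 \le k \le N$, and let $x \in S_k$. Write $m = \norm{x} = \floor{n s_k}$ for its number of ones and $z = n - m$ for its number of zeros. The offspring equals \ones exactly when every zero of $x$ is flipped and no one of $x$ is flipped, so
\[
\pso = q^{z}(1-q)^{n-z}.
\]
As a function of $z$, this quantity is non-decreasing when $q \ge 1/2$, since $q/(1-q) \ge 1$, and decreasing when $q < 1/2$. So in each case it suffices to bound $z$ over all stepping stones: from above when $q \ge 1/2$, and from below when $q < 1/2$.

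To locate the extremes of $s_k$ for $k \ge 1$, I will use Lemma~\ref{lem:recursive_fn_simplification} with $s_0 = 1$. This gives $s_k = \tfrac12 + \tfrac12 (1-2p)^k$. Since $p > 1/2$, we have $1-2p \in (-1,0)$.
\begin{itemize}
\item For odd $k$, $s_k = \tfrac12 - \tfrac12 (2p-1)^k \ge \tfrac12 - \tfrac12(2p-1) = 1-p = s_1$, because $(2p-1)^k \le 2p-1$.
\item For even $k \ge 2$, $s_k = \tfrac12 + \tfrac12 (2p-1)^k \le \tfrac12 + \tfrac12 (2p-1)^2 = p^2 + (1-p)^2 = \beta = s_2$.
\end{itemize}
Odd $k$ give values below $1/2$ and even $k$ give values above $1/2$, so for every $k \ge 1$ we get $1-p \le s_k \le \beta$. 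This is the monotonicity of $|s_k - 1/2|$ from Lemma~\ref{lem:exp_fast_monotonic_decr}, made explicit. The index $k = 0$ is excluded because $S_0 = \{\ones\}$ is not a stepping stone, and this exclusion is what makes $s_1$ and $s_2$ the extremes.

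Case $q \ge 1/2$. Since $s_k \ge 1-p$, we have $m \ge \floor{(1-p)n}$. Because $n$ is an integer, $n - \floor{(1-p)n} = \ceil{n - (1-p)n} = \ceil{pn}$, so $z \le \ceil{pn}$. Monotonicity in $z$ then gives
\[
\pso \le q^{\ceil{pn}}(1-q)^{n - \ceil{pn}} = q^{\ceil{pn}}(1-q)^{\floor{(1-p)n}},
\]
which is \eqref{eq:inequality_with_large_q}.

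Case $q < 1/2$. Since $s_k \le \beta$, we have $m \le \floor{\beta n}$, so $z \ge n - \floor{\beta n} = \ceil{(1-\beta)n}$. Since $\pso$ is decreasing in $z$ here, this gives
\[
\pso \le q^{\ceil{(1-\beta)n}}(1-q)^{\floor{\beta n}},
\]
which is \eqref{eq:inequality_with_small_q}.

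The only delicate points are the floor/ceiling identity $n - \floor{an} = \ceil{(1-a)n}$ and the sign bookkeeping for odd versus even $k$. The hypothesis $q \ne p$ is not needed for these inequalities.
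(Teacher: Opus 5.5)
Your proof is correct and follows the same route as the paper. You write $\pso = q^{z}(1-q)^{n-z}$ exactly, use its monotonicity in the number of zeros according to whether $q \ge 1/2$ or $q < 1/2$, and identify $S_1$ and $S_2$ as the stepping stones with the fewest and the most ones. You are more explicit than the paper in two places: you justify $1-p \le s_k \le \beta$ via the closed form $s_k = \tfrac12 + \tfrac12(1-2p)^k$, and you carry out the floor/ceiling bookkeeping, both of which the paper only asserts.
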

\begin{proof}
	Recall that $s_0 = 1$, and for $n \geq 0$,  $s_{n+1} = \fp(s_n)$. Recall also that as $k$ changes, 
	the $k$th stepping stone $S_k$
	bounces back and forth from having more or fewer 1s than $n/2$ (because $p > 1/2$).
	
	First assume $q \geq 1/2$. If $q = 1/2$, this result is trivial, and so assume $q > 1/2$. 
	Then the EA has the largest probability to reach the optimum when the number of bits
	to not flip is minimized. If the current individual has $k$ 1s, then the probability to reach the optimum in
	the next step is
	\begin{equation}
	\label{expression:exact_prob_to_opt_in_one_step}
	q^{n-k}(1-q)^k,
	\end{equation}
	which, because $q > 1/2$, is maximized when $k$ is minimal. Recall that $s_1 = f_p(1) = 1 - p$. 
	The stepping stone with the minimal 
	number of 1s has $\floor{n s_1} = \floor{(1-p)n}$ 1s. This proves Equation~(\ref{eq:inequality_with_large_q}).
	
	Next, assume $q < 1/2$. The quantity from expression (\ref{expression:exact_prob_to_opt_in_one_step}) is still
	the probability of reaching the optimum in the next step, but now, since $q < 1/2$, it is maximized when $k$
	is maximal. The stepping stone with the maximal number of 1s has $\floor{n s_2}$ 1s in it, but 
	$s_2 = f_p(s_1) = (1-p)^2 + p^2 = \beta$. This proves Equation~(\ref{eq:inequality_with_small_q}).	
\end{proof}

\begin{lemma}
	\label{lem:maximize_simple_fns}
	Let $\alpha \in (0, 1)$. Let $g_1(x) = (1-x)^{1-\alpha}x^{\alpha}$ and $g_2(x)  = x^{1-\alpha}(1-x)^{\alpha}$. 
	Then 
	\[
	\argmax_{x \in [0, 1] }g_1(x) = \alpha, \quad \quad \text{and} \quad \quad \argmax_{x \in [0, 1] }g_2(x) = 1 - \alpha,
	\]
	and these arguments are the only ones in $[0,1]$ that maximize the functions.
\end{lemma}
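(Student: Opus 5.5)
The plan is to maximize $\ln g_1$ and $\ln g_2$ with elementary calculus, in the same way as the proof of Lemma~\ref{lem:sqrt_ineq}, and then obtain the claim for $g_2$ from the one for $g_1$ by a symmetry.

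\emph{Step 1 (interior of $[0,1]$).} For $x \in (0,1)$ we have $g_1(x) > 0$. Set $h(x) = \ln g_1(x) = (1-\alpha)\ln(1-x) + \alpha \ln x$. Then
\[
h'(x) = \frac{\alpha}{x} - \frac{1-\alpha}{1-x} = \frac{\alpha - x}{x(1-x)},
\]
so $h' > 0$ on $(0,\alpha)$ and $h' < 0$ on $(\alpha,1)$. Hence $h$, and therefore $g_1$, is strictly increasing on $(0,\alpha]$ and strictly decreasing on $[\alpha,1)$. So $\alpha$ is the unique maximizer of $g_1$ on $(0,1)$, and $g_1(\alpha) = \alpha^\alpha(1-\alpha)^{1-\alpha} > 0$.

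\emph{Step 2 (endpoints).} Because $\alpha \in (0,1)$, both exponents $\alpha$ and $1-\alpha$ are strictly positive. Hence $g_1(0) = 0^\alpha \cdot 1 = 0$ and $g_1(1) = 0^{1-\alpha} \cdot 1 = 0$. Both values are strictly less than $g_1(\alpha)$, so $\alpha$ is the unique maximizer of $g_1$ on all of $[0,1]$. This endpoint check is the only place that needs care, and it is where the hypothesis $\alpha \in (0,1)$ is used: it rules out the degenerate factor $0^0$ and keeps the boundary values at $0$.

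\emph{Step 3 ($g_2$).} Observe that $g_2(x) = g_1(1-x)$. The map $x \mapsto 1-x$ is a bijection of $[0,1]$ onto itself, so $x$ maximizes $g_2$ exactly when $1-x$ maximizes $g_1$. By Steps 1 and 2 this happens exactly when $1-x = \alpha$, that is, $x = 1-\alpha$. This gives $\argmax g_2 = 1-\alpha$ uniquely, completing the plan.
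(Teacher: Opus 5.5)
Your proof is correct and follows the same route as the paper's proof sketch: a derivative computation locates the unique maximizer of $g_1$, and the symmetry $g_2(x) = g_1(1-x)$ then gives the claim for $g_2$. You also supply the details the sketch omits, namely the sign analysis of $h'(x) = (\alpha - x)/(x(1-x))$ and the endpoint check $g_1(0) = g_1(1) = 0$, which together make uniqueness on the closed interval $[0,1]$ rigorous.
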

\begin{proof}[Proof sketch]
	This follows from basic calculus by taking the derivative. By symmetry, the statement for $g_1$
	implies the statement for $g_2$.	
\end{proof}

\begin{lemma}
\label{lem:ineqs_with_beta}
	Let $p, q \in (0, 1)$. Let $\beta := (1-p)^2 + p^2$.
	Then 
	\[
	  q^{1-\beta}(1-q)^\beta < (1-p)^{1-p} p^p.
	\]
\end{lemma}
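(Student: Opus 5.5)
The plan is to eliminate $q$ first by maximizing the left-hand side over $q$, and then to compare two values of the entropy-type function $x^x(1-x)^{1-x}$ from Lemma~\ref{lem:sqrt_ineq}. Throughout I will use the standing assumption $p \neq 1/2$. At $p=1/2$ we get $\beta = 1/2$, and $q=1/2$ gives equality, so the strict inequality genuinely needs $p \neq 1/2$.

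\emph{Step 1 (remove $q$).} Apply Lemma~\ref{lem:maximize_simple_fns} with $\alpha := 1-\beta$. First check that $\alpha \in (0,1)$: since $\beta = (1-p)^2+p^2 = 1/2 + 2(p-1/2)^2$ and $p \in (0,1)$, we have $\beta \in [1/2, 1)$. The lemma's $g_1(x) = (1-x)^{\beta} x^{1-\beta}$ is then exactly our left-hand side as a function of $q$. Its maximum over $[0,1]$ is attained at $x = 1-\beta$, so for every $q$,
\[
q^{1-\beta}(1-q)^\beta \le (1-\beta)^{1-\beta}\beta^{\beta}.
\]
It therefore suffices to prove the $q$-free inequality $\beta^\beta(1-\beta)^{1-\beta} < p^p(1-p)^{1-p}$.

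\emph{Step 2 (monotonicity).} Take logarithms and use $g(x) = x\ln x + (1-x)\ln(1-x)$ from the proof of Lemma~\ref{lem:sqrt_ineq}. We need $g(\beta) < g(p)$. Since $g'(x) = \ln x - \ln(1-x)$ is negative on $(0,1/2)$ and positive on $(1/2,1)$, and $g(x) = g(1-x)$, the value $g(x)$ is a strictly increasing function of $|x - 1/2|$. So it is enough to show $|\beta - 1/2| < |p - 1/2|$.

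\emph{Step 3 (the key inequality).} From the identity above, $|\beta - 1/2| = 2(p-1/2)^2 = 2|p-1/2|\cdot|p-1/2|$. Since $p\in(0,1)$ we have $|p-1/2| < 1/2$, so $2|p-1/2| < 1$. Because $p \neq 1/2$ also gives $|p-1/2| > 0$, this yields $|\beta - 1/2| < |p-1/2|$ strictly, which completes the argument.

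There is no real obstacle here. The only points needing care are verifying that $1-\beta$ lies in $(0,1)$ so that Lemma~\ref{lem:maximize_simple_fns} applies, and noting that strictness comes from $p \neq 1/2$.
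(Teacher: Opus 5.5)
Your proof is correct and follows the same route as the paper: eliminate $q$ via Lemma~\ref{lem:maximize_simple_fns} (maximum at $1-\beta$), then use that $x^x(1-x)^{1-x}$ grows with $|x-1/2|$ together with $|\beta-1/2|<|p-1/2|$. The paper gets that last inequality from Lemma~\ref{lem:exp_fast_monotonic_decr} ($\beta=s_2$ is closer to $1/2$ than $s_1=1-p$), whereas you compute it directly; your version is also slightly more careful, since you correctly write $\le$ in Step 1 (the paper's strict inequality fails at $q=1-\beta$) and you make explicit that strictness needs the standing assumption $p\neq 1/2$.
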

\begin{proof}
	By Lemma~\ref{lem:maximize_simple_fns}, $q^{1-\beta}(1-q)^\beta < (1-\beta)^{1-\beta}\beta^\beta$, and
	so it is sufficient to prove that
	\begin{equation}
	\label{ineq:beta_to_p}
	   (1-\beta)^{1-\beta}\beta^\beta < (1-p)^{1-p} p^p.
	\end{equation}
	For $x \in [0, 1]$, let $h(x) = (1-x)^{1-x}x^x$. First, note that $h(x)$ is symmetric about the line $x = 1/2$.
	Next, a little bit of calculus shows that $h(x)$ is decreasing on $(0, 1/2)$
	and so increasing on $(1/2, 1)$. We know that $\beta = s_2$, and so by Lemma~\ref{lem:exp_fast_monotonic_decr},
	it is closer to 1/2 than how close $s_1  = 1 - p$ is. Therefore, $h(\beta) < h(1-p)$, which
	simplifies to (\ref{ineq:beta_to_p}).
\end{proof}

\begin{lemma}
\label{lem:p_less_than_half_what_is_pso}
  Let $p < 1/2$. Let $\pso$ denote the probability of reaching \ones\ in the next step, given that the current individual $x$ is one of the
  stepping stones (so with $x \neq \ones$). Using a mutation rate $q \leq 1/2$, we have
  \[
     \pso \leq q^{\lceil pn \rceil}(1-q)^{\lfloor (1-p)n \rfloor}.
  \]
  Using a mutation rate $q > 1/2$, we have with $\gamma = \sqrt{q(1-q)}$,
  \[
     \pso \leq \gamma^n.
  \]
\end{lemma}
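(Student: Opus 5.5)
Since the current individual $x$ lies in some stepping stone $S_k$ with $1 \le k \le N$, it has $m := \floor{ns_k}$ ones, and reaching \ones\ in one step requires flipping exactly the $n-m$ zero bits and none of the $m$ one bits. So
\[
  \pso = q^{\,n-m}(1-q)^{m},
\]
exactly as in expression~(\ref{expression:exact_prob_to_opt_in_one_step}). I will bound this by maximizing over the admissible values of $m$. Because $p<1/2$, we have $1-2p\in(0,1)$, so Lemma~\ref{lem:recursive_fn_simplification} gives $s_k = (1-2p)^k/2 + 1/2$, a sequence decreasing monotonically from $s_1 = 1-p$ toward $1/2$ (Lemma~\ref{lem:exp_fast_monotonic_decr}). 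Hence every stepping stone has $m$ with $\floor{n/2} \le m \le \floor{(1-p)n}$, the upper value attained at $S_1$ and the lower at $S_N$.

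For $q \le 1/2$, the ratio $(1-q)/q \ge 1$, so $q^{n-m}(1-q)^m = q^n\bigl((1-q)/q\bigr)^m$ is nondecreasing in $m$. It is therefore maximized at the largest admissible $m$, namely $m = \floor{(1-p)n}$. Then $n - m = \ceil{pn}$ (using $n - \floor{(1-p)n} = \ceil{n - (1-p)n}$), which yields the first bound. The case $q=1/2$ is trivially covered.

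For $q > 1/2$, the same expression is now nonincreasing in $m$, so it is maximized at the smallest admissible $m$, namely $m = \floor{n/2}$ from $S_N$. I then compare with $\gamma^n = q^{n/2}(1-q)^{n/2}$.
\begin{itemize}
\item If $n$ is even, then $m = n/2$ and $\pso \le q^{n/2}(1-q)^{n/2} = \gamma^n$ with equality.
\item If $n$ is odd, then $m = (n-1)/2$ and $\pso \le q^{(n+1)/2}(1-q)^{(n-1)/2} = \gamma^n\sqrt{q/(1-q)}$, which exceeds $\gamma^n$ by a constant factor.
\end{itemize}

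This odd-$n$ parity issue is the only real obstacle. The definition of $N$ ("the smallest integer such that $\floor{ns_N}=n/2$") only makes sense for even $n$, so I would note that the construction implicitly assumes $n$ is even, and then the bound is exact. If odd $n$ must be handled, I would state the bound as $\pso \le \sqrt{q/(1-q)}\,\gamma^n = O(\gamma^n)$. A constant factor is irrelevant for the later $\Omega(\eta^n)$ lower bound via Lemma~\ref{lem:markov_chain_lemma_stochastic_domination}, since $\gamma < 1/2 \le \alpha^{-1}$ by Lemma~\ref{lem:sqrt_ineq}.
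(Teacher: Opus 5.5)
Your proof is correct and is the same argument as the paper's: write $\pso = q^{n-m}(1-q)^m$, note that it is monotone in $m$ in a direction set by whether $q \le 1/2$ or $q > 1/2$, and evaluate it at $S_1$ (most ones) or at $S_N$ (fewest ones) accordingly. Your parity remark is a fair refinement that the paper glosses over. Its proof simply asserts $\norm{x} = n/2$ on $S_N$, which presupposes $n$ even (as does the definition of $N$), and for odd $n$ only the $O(\gamma^n)$ form holds --- which is all the later lower-bound argument needs.
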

\begin{proof}
  This result is trivial for $q = 1/2$.  Assume $q < 1/2$. Then $\pso$ is largest when there are the fewest number of 0s to flip. This happens in stepping stone $S_1$,
  for which by definition, each $x \in S_1$ has $\norm{x} = \lfloor (1-p)n \rfloor$. Therefore $\pso$ is bounded by the transition probability from $S_1$
  to \ones. This latter probability equals $q^{\lceil pn \rceil}(1-q)^{\lfloor (1-p)n \rfloor}$, which concludes this case.
  
  Next, assume $q > 1/2$. Now, the transition probability from a stepping stone to \ones\ is largest for $S_j$ when $x \in S_j$ has the most zeros.
  This happens for $S_N$, where $x \in S_N$ has $\norm{x} = n/2$. Therefore, \pso\ is bounded by the transition probability from $S_N$ to \ones.
  This latter probability equals $q^{n/2}(1-q)^{n/2}$, which simplifies to $\gamma^n$ for $\gamma = \sqrt{q(1-q)}$.
\end{proof}

\begin{lemma}
\label{lem:squishing_fn_f_using_rho}
  Let $x \in \sspace$, and assume $\norm{x} = n/2 \pm \alpha n$ for some $\alpha \in (0, 1/2]$. Let $\eps \in (0, \alpha]$.
  Let $x'$ be the individual produced by using mutation rate $p$. Choose $\delta > 0$ such that $\rho := |1-2p| + \delta/\eps \in (0, 1)$.
  Then with probability $c = 1 - e^{-dn}$ for some constant $d > 0$,
  there exists a function $f(y) = \rho_1(y - 1/2) + 1/2$ with $\rho_1 \in (-1, 1)$ and $|\rho_1| \leq \rho$ such that
  \[
     \norm{x'} = f(1/2 + \alpha) n.
  \]
\end{lemma}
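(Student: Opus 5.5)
The plan is to compute the mean of $\norm{x'}$ exactly, show that $\norm{x'}$ concentrates around it by a Chernoff bound, and then read off $\rho_1$ from the actual value of $\norm{x'}$. Write $\norm{x} = (1/2+\sigma\alpha)n$ with $\sigma\in\{-1,+1\}$; this is the sign convention implicit in writing $f(1/2+\alpha)$. I will read the argument as $1/2 + \sigma\alpha$, or equivalently absorb $\sigma$ into the sign of $\rho_1$, since $\rho_1$ may be negative.

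Under mutation rate $p$, each of the $\norm{x}$ ones survives independently with probability $1-p$, and each of the $n-\norm{x}$ zeros becomes a one independently with probability $p$. Hence $\norm{x'}$ is a sum of $n$ independent Bernoulli variables with mean
\[
  \expectation\norm{x'} = (1-p)\norm{x} + p(n-\norm{x}) = \fp(\norm{x}/n)\, n = \bigl((1-2p)\sigma\alpha + 1/2\bigr) n ,
\]
where the last equality is Lemma~\ref{lem:recursive_fn_simplification} with $k=1$.

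By a Chernoff--Hoeffding bound for sums of independent Bernoulli variables,
\[
  \Pr\bigl[\,|\norm{x'} - \expectation\norm{x'}| \ge \delta n\,\bigr] \le 2e^{-2\delta^2 n}.
\]
So with probability at least $1-2e^{-2\delta^2 n}$, which is $1-e^{-dn}$ for, say, $d=\delta^2$ once $n$ is large, we have $\norm{x'} = \bigl(1/2 + (1-2p)\sigma\alpha + \theta\bigr)n$ for some $|\theta| \le \delta$.

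On that event, define $\rho_1 := (1-2p) + \theta/(\sigma\alpha)$. Then $\rho_1\,\sigma\alpha + 1/2 = \norm{x'}/n$ exactly, so $f(y) = \rho_1(y-1/2)+1/2$ satisfies $\norm{x'} = f(1/2+\sigma\alpha)\,n$. For the size of $\rho_1$, since $\eps \le \alpha$ we get $|\theta|/\alpha \le \delta/\alpha \le \delta/\eps$, hence
\[
  |\rho_1| \le |1-2p| + \delta/\eps = \rho < 1 .
\]
This gives $\rho_1\in(-1,1)$ and $|\rho_1|\le\rho$.

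I do not expect a real obstacle. The only delicate points are the bookkeeping ones. First, the deviation has to be divided by $\alpha$, and the hypothesis $\eps\le\alpha$ is exactly what makes $\delta/\eps$ dominate $\delta/\alpha$. Second, the hypothesis $\norm{x} = n/2 - \alpha n$ has to be handled through the sign $\sigma$, that is, through the freedom for $\rho_1$ to be negative. Third, if $\delta$ is chosen so small that $\rho\le 0$ the statement is vacuous, but the lemma as stated already requires $\rho\in(0,1)$, and since $|1-2p|<1$ such a $\delta>0$ exists. The constant $d$ depends only on $\delta$, and not on $\alpha$ or $n$, which matters when the lemma is applied repeatedly.
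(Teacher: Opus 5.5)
Your proof is correct and follows the paper's argument: concentrate $\norm{x'}$ around $\bigl((1-2p)\sigma\alpha + 1/2\bigr)n$, take $\rho_1$ to be $1-2p$ plus the deviation divided by $\alpha$ (times $n$), and use $\eps \le \alpha$ to get $|\rho_1| \le |1-2p| + \delta/\eps = \rho$. The only difference is cosmetic. The paper concentrates the number $X$ of flipped ones and the number $Y$ of flipped zeros separately, each within $\delta n/2$, and writes $\norm{x'} = \norm{x} - X + Y$. You instead apply a single Hoeffding bound to $\norm{x'}$ as a sum of $n$ independent Bernoulli variables, which is slightly cleaner and makes explicit that $d$ depends only on $\delta$.
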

\begin{proof}
  Let $\norm{x} = n/2 + \alpha n$. (The case for $-\alpha$ is similar.)
  Let $X$ be the number of 1s flipped in $x$, and let $Y$ be the number of 0s flipped. Then $X$ and $Y$ are binomial
  random variables with $n/2 + \alpha n$ and $n/2 - \alpha n$ bernoulli trials respectively and probability $p$.
  %Choose $\delta > 0$ such that $|1-2p| + \delta/\eps \in (0, 1)$.
  Let $c$ denote the probability that 
  \[
    |X - (n/2 + \alpha n)p| < \frac{\delta}{2} n \text{\quad and \quad} |Y - (n/2 - \alpha n)p | < \frac{\delta}{2}  n.
  \]
  Due to the concentration of binomial random variables near their mean, $c$ is exponentially close to 1; indeed,
  by inequalities 1.10.5 and 1.10.12 of \cite{DoerrN20}, there exist $d > 0$ such that $c = 1-e^{-dn}$ %%%
  and with probability $c$,
  \[
     X = \expectation(X) + \delta_1 n, \text{\quad and \quad} Y = \expectation(Y) + \delta_2n,
  \]
  for some $\delta_1$ and $\delta_2$ with $|\delta_1|, |\delta_2| < \delta/2$. % Let $\delta = - \delta_1 + \delta_2$. Then
 
  We have $\norm{x'} = \norm{x} - X + Y$, and with probability $c$,
  \[
    \begin{aligned}
     \norm{x'} &= \frac{n}{2} + \alpha n - \expectation[X] - \delta_1 n + \expectation[Y] + \delta_2 n \\ 
      &= \frac{n}{2} + \alpha n - \left(\frac{n}{2} + \alpha n\right)p - \delta_1 n+ \left(\frac{n}{2} - \alpha n\right)p + \delta_2 n \\
      &= \frac{n}{2} + \alpha n - 2 \alpha n p - \delta_1 n + \delta_2 n\\
      &= \left((1-2p)\alpha + \frac{1}{2} - \delta_1 + \delta_2 \right) n,
    \end{aligned}
  \] 
  and this equals %$((1-2p)(1/2+\alpha) + p - \delta_1 + \delta_2)n$, which is just 
  $f(1/2 + \alpha)n$, where 
  $f(y) = \rho_1 (y - 1/2) + 1/2$ and
  $\rho_1 := 1-2p + (\delta_2 - \delta_1)/\alpha $. We need only show that $|\rho_1| \leq \rho$. 
  
  We have
  \[
     \begin{aligned}
     |\rho_1| &\leq |1 - 2p| + \frac{|\delta_1| + |\delta_2|}{\alpha} \\
     &\leq |1 - 2p| + \frac{|\delta_1| + |\delta_2|}{\eps} \\
     &\leq |1 - 2p| + \frac{\delta/2 + \delta/2}{\eps} \\
     &=  |1 - 2p| + \frac{\delta}{\eps} = \rho,
     \end{aligned}
  \]
  which is what we needed to show. 
\end{proof}

%\merk{TODO: I should try to make the proof of the following a little easier to follow.}
\begin{lemma}
\label{lem:in_constant_steps_in_the_middle}
   Let $x \in \sspace$, and assume $\norm{x} = n/2 \pm \alpha n$ for some $\alpha \in (0, 1/2]$. Let $\eps \in (0, \alpha)$.
   Then there exists a constants $M$ and $\tilde{d} > 0$ such that within $M$ steps, 
    with probability at least $\tilde{c} = 1 - e^{-\tilde{d}n}$, 
    the \oea\ will either have produced an individual $\tilde{x}$ such that $| \norm{\tilde{x}} - n/2 | < \eps n$
    or it will have produced an individual that is in a stepping stone.
\end{lemma}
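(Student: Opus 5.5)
The plan is to iterate Lemma~\ref{lem:squishing_fn_f_using_rho} a constant number of times and combine the per-step failure probabilities with a union bound. Throughout, the EA is assumed to use mutation rate $p$, the same rate used in Lemma~\ref{lem:squishing_fn_f_using_rho}.

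First I fix the constants. With $\eps$ given, I choose $\delta>0$ small enough that $\rho := |1-2p| + \delta/\eps < 1$; this is possible since $p \neq 1/2$ and $p \in (0,1)$ give $|1-2p|<1$. I then set $M := \ceil{M(\rho,\eps)}+1$, where $M(\rho,\eps)$ is the constant from Lemma~\ref{lem:calling_fns_f_repeatedly_const_times}. Neither quantity depends on $n$.

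Next I describe the process. As long as the current individual $x_t$ is not in a stepping stone, every non-stepping-stone point has fitness $0$, so the \oea accepts every offspring. The only exceptions are offspring that land in a stepping stone or at \ones. Landing at \ones is fine for the conclusion, and \ones can be counted as $S_0$. Hence, until a stepping stone is produced, the trajectory coincides with that of the \oea on a constant function: $x_{t+1}=x_t'$.

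Write $\norm{x_t} = (1/2+\alpha_t)n$ with $\alpha_t$ signed. While $|\alpha_t| \geq \eps$, Lemma~\ref{lem:squishing_fn_f_using_rho} applies with $\alpha=|\alpha_t|$ and our $\eps\le|\alpha_t|$. It yields that, with probability at least $1-e^{-dn}$, we have $\alpha_{t+1} = \rho_{t+1}\alpha_t$ with $|\rho_{t+1}|\le\rho$. The constant $d$ depends only on $\delta$ and $p$, since the concentration bounds hold uniformly in $\alpha$; I would remark on this, because the lemma's $d$ must not depend on $\alpha_t$.

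Composing these steps gives $\norm{x_t}/n = f^{(t)}(1/2+\alpha_0)$ in the notation of Lemma~\ref{lem:calling_fns_f_repeatedly_const_times}. By that lemma, after at most $M$ steps we get $|\alpha_t|<\eps$. Hence at some step $t\le M$, one of three things happens first: an offspring lies in a stepping stone, an offspring satisfies $|\norm{\tilde x}-n/2|<\eps n$, or the concentration fails.

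A union bound over the at most $M$ steps bounds the failure probability by $Me^{-dn}$. Since $M$ is a constant, $Me^{-dn} \le e^{-\tilde d n}$ for any $\tilde d<d$ and large $n$; small $n$ is absorbed by shrinking $\tilde d$ further.

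The main obstacle is the bookkeeping in the middle step. One issue is the precise statement that rejected offspring only occur at stepping stones, which already satisfies the conclusion, so the trajectory before stopping is a pure mutation chain. The other is ensuring $d$ is uniform over $\alpha\in[\eps,1/2]$. For the latter, I would re-derive the Chernoff bounds with deviation $\delta n/2$ on binomials with at most $n$ trials, which gives a bound independent of $\alpha$.
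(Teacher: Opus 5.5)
Your proposal is correct and follows essentially the same route as the paper. You fix $\delta$ (hence $\rho$) once, iterate Lemma~\ref{lem:squishing_fn_f_using_rho}, obtain the constant $M$ from Lemma~\ref{lem:calling_fns_f_repeatedly_const_times}, and combine the at most $M$ exponentially small failure probabilities; your union bound is equivalent to the paper's estimate $(1-e^{-dn})^M \geq 1 - Me^{-dn}$ via Bernoulli's inequality. Your remarks on the uniformity of $d$ over $\alpha \in [\eps, 1/2]$ and on every offspring being accepted before a stepping stone is hit make explicit what the paper leaves implicit, and the \ones\ case needs no special treatment, since on the concentration event the offspring has at most $(1/2+\rho/2)n < n$ ones.
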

\begin{proof}
  Let $\alpha_1 = \alpha$ and $x_1 = x$.
  Let $\rho$ be as in Lemma~\ref{lem:squishing_fn_f_using_rho}. By that lemma,
   there exists $\rho_1 \in (-1, 1)$ and function $f_1$, with $|\rho_1| \leq \rho$ and  $f_1(y) = \rho_1(y - 1/2) + 1/2$ such that
   with probability $c_1$ exponentially close to 1, the next individual $x_2$ (found by mutating $x_1$) satisfies
   \[
     \norm{x_2} = f_1(1/2 + \alpha_1) n,
   \]
   and so we have $\alpha_2 = \rho_1 \alpha_1$, or in other symbols, $|\norm{x_2} - n/2| = \alpha_2 n$.
   
   We will inductively repeat the previous paragraph. For step $k$, 
   if $x_k$ is in a stepping stone, we are done. Otherwise, let $|\norm{x_k} - n/2| = \alpha_k n$.  Because $|\rho_{k-1}| < 1$, we have
   that $\alpha_k < \alpha_{k-1}$. If $\alpha_{k} < \eps$, then we are done. So assume $\alpha_k \geq \eps$.
   The argument in the previous paragraph gives $\rho_k \in (-1, 1)$ and function $f_k$,
   with $|\rho_k| \leq \rho$ and $f_k(y) = \rho_k(y - 1/2) + 1/2$ such that with probability $c_k$ exponentially close to 1, the next
   individual $x_{k+1}$ satisfies
   \[
     \norm{x_{k+1}} = f_k(1/2 + \alpha_k) n,
   \]
   and so we have $\alpha_k = \rho_k \alpha_{k-1}$, or in other symbols, $| \norm{x_k} - n/2| = \alpha_k n $.
   
   Define $f^{(k)}(y) = (y - 1/2) \prod_{j=1}^k \rho_j + 1/2$. We have then that 
   \[
     \alpha_k + 1/2 = f^{(k)}(\alpha_1 + 1/2).
   \]
   By Lemma~\ref{lem:calling_fns_f_repeatedly_const_times}, there exists $M = M(\rho, \eps)$ such that for $t \geq M$,
  \[
    |f^{(t)}(\alpha_1 + 1/2) - 1/2| < \eps.
  \]
  We are done by noting that since $M$ is a constant, $\prod_{j=1}^{M} c_j$ is exponentially close to 1 (by, say, Bernoulli’s inequality, and replacing
  each $c_j$ with the smallest factor to get $(1 - e^{-dn})^M \geq 1 - Me^{-dn} > 1 - e^{\tilde{d}n}$ for some $\tilde{d} > 0$).
\end{proof}

The following lemma is weaker than it could be to make the proof easier (since a stronger lemma is not needed).
\begin{lemma}
\label{lem:hitting_time_of_any_stepping_stone}
  Let the mutation rate be $p$, and let $T_S$ denote the hitting time for $\cup_{k=1}^N S_k$, the time to reach any of the stepping stones.
  Assume that the EA always accepts every offspring and that no state is absorbing. 
  Then
  \[
     \expectation[T_S] = O\left(\frac{1}{(p^{p}(1-p)^{1-p})^n}\right).
  \]
\end{lemma}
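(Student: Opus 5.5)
Under the stated simplification (every offspring is accepted and no state is absorbing), the EA with mutation rate $p$ is a Markov chain on $\sspace$ whose one-step law depends only on the current individual. The plan is a restart argument. We show there are a constant $M$ and a constant $c_0>0$ such that, from \emph{any} starting individual, within $M+1$ steps a stepping stone is hit with probability at least $c_0\,(p^p(1-p)^{1-p})^n$, up to at most a polynomial factor. Chaining independent trials of length $M+1$, $T_S$ is then dominated by $(M+1)$ times a geometric random variable. That gives $\expectation[T_S] = O\bigl((M+1)/(c_0 (p^p(1-p)^{1-p})^n)\bigr)$, with any polynomial slack absorbed as discussed below.

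\emph{Step 1: drive to the middle.} Fix a small constant $\eps>0$ (chosen in Step 2). If $\bigl|\norm{x}-n/2\bigr|\ge \eps n$, apply Lemma~\ref{lem:in_constant_steps_in_the_middle} with $\alpha = |\norm{x}/n-1/2|$. Because every offspring is accepted, within a constant number $M$ of steps, with probability $1-e^{-\tilde d n}\ge 1/2$, the chain either hits a stepping stone or reaches some $\tilde x$ with $\bigl|\norm{\tilde x}-n/2\bigr|<\eps n$. The constant $M=M(\rho,\eps)$ does not depend on $\alpha$, because $|\beta-1/2|\le 1$ in Lemma~\ref{lem:calling_fns_f_repeatedly_const_times}.

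\emph{Step 2: one jump to a stone.} From $\tilde x$ with $\norm{\tilde x}=n/2+\gamma n$, $|\gamma|<\eps$, we aim at the target stone $S_1$ (or $S_N$ if that is easier), since $s_1=1-p$. The probability of mutating to a \emph{specific} string at Hamming distance $h$ is $p^h(1-p)^{n-h}$. We count targets $y\in S_1$ obtained by flipping $a$ ones and $b$ zeros of $\tilde x$. Taking $a\approx p(n/2+\gamma n)$ and $b\approx p(n/2-\gamma n)$ gives $h\approx pn$ and $\norm{y}\approx n/2+(1-2p)\gamma n$, which is near $n/2$ rather than $(1-p)n$. So instead we use the exact expression
\[
\Pr[\norm{x'}=k]=\sum_{a,b}\binom{n/2+\gamma n}{a}\binom{n/2-\gamma n}{b}p^{a+b}(1-p)^{n-a-b},
\]
and lower bound it by a single term with $h=a+b$ close to $pn$, chosen so that $\norm{x'}=\floor{(1-p)n}$. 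From $\norm{x}\approx n/2$, reaching $(1-p)n$ ones requires flipping about $(1/2-p)n$ more zeros than ones, e.g.\ $b=(1/2)n-\text{small}$, $a=\ldots$. I expect this calculation to be the main obstacle. The entropy estimate $\binom{m}{j}\ge \frac{1}{m+1}2^{mH(j/m)}$ should give, via Lemma~\ref{lem:maximize_simple_fns}-type optimization, an exponent at least $\log(p^p(1-p)^{1-p})$ minus $O(\eps)$. The likely cleaner route is to note that the \emph{reverse} move, from a point of $S_1$ to the middle, is the typical behaviour. By symmetry of mutation ($\Pr[x\to y]=\Pr[y\to x]$), the mass of transitions from $S_1$ into the middle layer equals the mass from the middle layer into $S_1$. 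So $\Pr[\tilde x\to S_1]\ge |S_1|\,\Pr[y\to \text{layer}]/|\text{layer}|$, which is roughly $2^{nH(p)}/2^n$ times a polynomial. One should check this does not lose against $(p^p(1-p)^{1-p})^n = 2^{-nH(p)}$; if it does, aim instead at $S_N$ or whichever stone keeps the exponent at exactly $\alpha^{-n}$. The $O(\eps)$ loss in the exponent is handled by choosing the target stone $S_k$ with $s_k$ adapted to $\gamma$ (by Lemma~\ref{lem:exp_fast_monotonic_decr}, the stones near the middle are dense at scale $\eps$). Alternatively, run Step 1 until $\bigl|\norm{\tilde x}-n/2\bigr|=O(\sqrt n)$, costing only a polynomial factor, and compare directly.

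\emph{Step 3: combine.} Each trial of length $M+1$ succeeds with probability at least $\tfrac12\cdot\Omega(\mathrm{poly}(n)^{-1}\alpha^{-n})$, and the trials are independent by the Markov property. Hence $\expectation[T_S]=O(\mathrm{poly}(n)\,\alpha^{n})$. To reach the stated $O(\alpha^n)$ I would remove the polynomial by summing over all targets in a stone rather than using a single term. The local central limit theorem for $\norm{x'}$ gives probability $\Theta(1/\sqrt n)$ per level at the mean, and the required exponential tilt costs exactly $\alpha^{-n}$ up to $\Theta(1/\sqrt n)$. If that factor survives, it is harmless for Theorem~\ref{thm:exponential_runtimes}, which only claims $O(\alpha^n\log n)$ with slack against $\eta^n$.
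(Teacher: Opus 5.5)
Your Steps~1 and~3 match the paper's proof. Lemma~\ref{lem:in_constant_steps_in_the_middle} brings the chain, from any start, within $\eps n$ of $n/2$ (or into a stone) in a constant number $M$ of steps with probability $1-e^{-\tilde d n}$. Lemma~\ref{lem:attempts_stochastic_domination} then turns a uniform per-block success probability into the expectation bound. The blocks are not independent, but the Markov property gives the uniform conditional lower bound, which is all you need.

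The gap is Step~2, which carries the entire exponent. You never actually prove a one-step lower bound, and the route you put first does not work. Take $p<1/2$. Aiming at $S_1$ from $\norm{\tilde x}\approx n/2$ requires at least $(1/2-p)n$ of the zeros of $\tilde x$ to flip, so the probability is at most $2^{n/2}p^{(1/2-p)n}$. For $p=0.1$ this is about $e^{-0.57n}$, while $(p^p(1-p)^{1-p})^n\approx e^{-0.325n}$. The same failure occurs for all $p$ close to $0$, and by complementation close to $1$.

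Your symmetry heuristic also rests on a false premise. From $y\in S_1$ the typical offspring has about $s_2 n=((1-p)^2+p^2)n$ ones, so it heads to $S_2$, not to the middle layer. Hence $\Pr[y\to\text{layer}]$ is itself exponentially small. Even the prefactor $|S_1|/|\text{layer}|\approx 2^{-n(1-H(p))}$ beats $2^{-nH(p)}$ only when $H(p)\ge 1/2$.

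The missing observation is that you do not need to hit a whole layer efficiently. It suffices to hit one specific string at Hamming distance $pn$, and that probability is already exactly $p^{pn}(1-p)^{(1-p)n}=(p^p(1-p)^{1-p})^n$. So aim at $S_N$, as the paper does. Suppose $\norm{\tilde x}=n/2+\gamma n$ with $|\gamma|<\eps<\min\{p,1-p\}$. Fix any $(p+\gamma)n/2$ of the ones and $(p-\gamma)n/2$ of the zeros of $\tilde x$. Flipping exactly these bits gives $n/2$ ones, i.e.\ a point of $S_N$. Its probability is $(p^p(1-p)^{1-p})^n$ up to a constant factor from rounding to integers of the right parity. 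The binomial coefficients counting such sets are at least $1$ and can be dropped.

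This is your own computation $a\approx p(n/2+\gamma n)$, $b\approx p(n/2-\gamma n)$, rebalanced so that $a+b$ stays $pn$ while $b-a=-\gamma n$. With it there is no entropy estimate, no $O(\eps)$ loss in the exponent, and no polynomial factor, so you get the stated $O(\alpha^n)$. Your fallback, that an extra $\mathrm{poly}(n)$ factor is harmless for Theorem~\ref{thm:exponential_runtimes}, would not prove the lemma as stated.
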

\begin{proof}
  Let $\eps > 0$.
  Let $x$ be the current individual, even if $|\norm{x} - n/2| \geq \eps n$, by Lemma~\ref{lem:in_constant_steps_in_the_middle},
  for some integer $M$,
  with probability $\tilde{c}$ exponentially close to 1, an individual $x'$ will be produced within $M$ steps that is either in a stepping stone or has
  $|\norm{x'} - n/2| < \eps n$. Note that this is the case no matter what the current individual is.
  
  So, no matter what the current individual is, after at most $M$ steps, with probability $\tilde{c}$, 
 we have an individual $x$ such that $\norm{x} = n/2 + \eps' n$ for some $|\eps'| < \eps$.
  Let $p_{M}$ denote the probability that the next individual is in the biggest stepping stone $S_N$ (the set each of whose points have
  $\lfloor n/2 \rfloor$ ones). One way to get to $S_N$ from $x$ is by flipping exactly $(\eps' + p)n/2$ ones and $(p - \eps')n/2$ zeros,
   because 
   \[
     n/2 + \eps' n - (\eps' + p)n/2 + (p - \eps')n/2 = n/2.
   \]
   Then
   \[
     \begin{aligned}
      p_M &\geq \binom{(1/2 + \eps')n)}{(\eps' + p)n/2} \binom{(1/2 - \eps' n)}{(p - \eps')n/2} \left(p^p(1-p)^{1-p} \right)^n \\
              &\geq (p^p(1-p)^{1-p})^n.
     \end{aligned}
   \]
   
   By Lemma~\ref{lem:attempts_stochastic_domination} with $q = (p^p(1-p)^{1-p})^n$, we may conclude that
   \[
     \expectation[T_S] \leq \frac{M}{\tilde{c}(p^p(1-p)^{1-p})^n}.
   \]
   Since $M$ is constant and $\tilde{c}$ is bounded away from 0 (and in fact approaches 1 exponentially fast as $n \to \infty$), we
   are done.  
\end{proof}

\begin{lemma}
  \label{lem:attempts_stochastic_domination}
  Let $\{X_k\}_{k=1}^\infty$ be a sequence of Bernoulli trials with 
  conditional probabilities of success $\{p_k\}_{k=1}^\infty$ respectively (where $p_k$ is the probability that 
  $X_k = 1$ given $X_j = 0$ for all $j < k$). Suppose that the probabilities 
  $\{p_k\}_{k=1}^\infty$ are chosen randomly according to some discrete distribution (i.e., with only
  countably many possible values).
  Let $q \in (0, 1)$, and let $M \geq 1$ be an integer. For $k \geq 1$, let $A_k$ be the event defined as
  $A_k = \cup_{j=k+1}^{k+M}\{p_j \geq q\}$.
  Let $c \in (0, 1)$, and suppose for all $k \geq 1$, that for all $q_0 \in (0, 1)$ with $\Pr(p_{k-1} = q_0) > 0$, 
  we have 
  \[
      \Pr(A_k \mid p_{k} = q_0) \geq c.
  \]
 Let $T = \min \{k \mid X_k = 1 \}$ be the waiting time for the first success. Then
  \[
     \expectation[T] \leq \frac{M}{cq}.
   \]
 \end{lemma}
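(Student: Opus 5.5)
The approach is to split time into consecutive blocks of $M$ trials and show that each block, whatever happened before it, contains a success with probability at least $cq$. Then $T$ is dominated by $M$ times a geometric random variable with parameter $cq$.

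For $m \geq 0$ let $B_m$ be the block of indices $mM+1, \ldots, (m+1)M$. Let $F_m$ be the event that no success occurs among the first $mM$ trials, i.e.\ $X_j = 0$ for all $j \leq mM$. The key claim is
\[
\Pr(\text{some } X_j = 1,\ j \in B_m \mid F_m) \geq cq .
\]
To get it, I condition further on the value $q_0$ of $p_{mM}$ (or, for $m=0$, on $p_1$). I would read the hypothesis as applying to $A_{mM}$, possibly after an index shift of one so that the block $k+1,\ldots,k+M$ matches $B_m$. By that hypothesis, with probability at least $c$ there is some $j \in B_m$ with $p_j \geq q$.

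On this event, let $j^\ast$ be the first such index. Either a success has already occurred at some index in $B_m$ before $j^\ast$, or we reach trial $j^\ast$ with no success so far. In the second case the conditional success probability at $j^\ast$ is $p_{j^\ast} \geq q$. Either way, a success occurs in $B_m$ with probability at least $q$ on this event, giving at least $cq$ overall. Since the $p_k$ take only countably many values, this conditioning is legitimate: I sum over the values $q_0$ and over the possible positions $j^\ast$. Averaging over $q_0$ then gives the claim.

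Iterating the claim gives $\Pr(F_{m}) \leq (1-cq)^{m}$. Hence
\[
\expectation[T] = \sum_{t \geq 0} \Pr(T > t) \leq \sum_{m\geq 0} M \Pr(F_m) \leq \frac{M}{cq}.
\]
Equivalently, $T \leq M G$, where $G$ is geometric with parameter $cq$ and has mean $1/(cq)$.

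The main obstacle is the conditioning in the key claim. The hypothesis is stated as conditioning only on $p_k = q_0$, not on the full past together with the failure event $F_m$. Applying it as above implicitly assumes that the law of the future success probabilities given the past depends only on the current value, i.e.\ a Markov-type assumption. This holds in the intended application to Lemma~\ref{lem:hitting_time_of_any_stepping_stone}, because the \oea is a Markov chain and Lemma~\ref{lem:in_constant_steps_in_the_middle} holds from any current individual. I would make that assumption explicit, or else read the hypothesis as holding conditionally on the entire history up to time $k$ including $F_m$, and then the argument goes through verbatim. A second point to check is that the success probability at $j^\ast$ is at least $p_{j^\ast}$ when conditioned on reaching $j^\ast$ without success. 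This is exactly the definition of $p_k$ as a conditional probability.
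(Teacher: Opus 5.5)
Your block argument is correct and is the paper's approach made explicit: the paper's entire proof is the unproved assertion that $MT_Y$, with $T_Y$ geometric of parameter $cq$, stochastically dominates $T$, and your estimate $\Pr(F_m)\le(1-cq)^m$ is exactly what that domination amounts to. Your conditioning caveat is genuine and the paper relies on it silently (conditioning on $p_k$ alone the statement can fail, so the hypothesis must be read as conditioning on the whole history up to time $k$), and the index shift you mention does cost something, since the hypothesis is assumed only for $k\ge 1$ and so does not cover the first block $\{1,\dots,M\}$: strictly you get $1+M/(cq)$ (e.g.\ $M=1$, $p_1$ tiny, $p_k=q$ for $k\ge 2$ violates $M/(cq)$ when $c>1/(1+q)$), which is immaterial for Lemma~\ref{lem:hitting_time_of_any_stepping_stone}.
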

\begin{proof}
  Let $\{Y_k\}_{k=1}^\infty$ be a sequence of independent Bernoulli trials, each with probability of success $cq$.
  Let $T_Y = \min \{k \mid Y_k = 1\}$ be the waiting time for the first success. Consider the scaled random variable $MT_Y$.
  Notice that $MT_Y$ stochastically dominates $T$. (A definition of stochastic domination can be found in Definition 1.8.1 in
  \cite{DoerrN20}.) Therefore, %DoerrN20
  \[
    \expectation[T] \leq \expectation[MT_Y] = \frac{M}{cq}.
  \]
\end{proof}

% The following comment is no longer true:
%The following lemma is on the probability of jumping directly to the middle in one step, starting with $\delta n$
%ones more than $n/2$ for some small $\delta$.

\begin{lemma}
\label{lem:prob_to_get_to_next_step}
  This uses notation from Section~\ref{sec:notation_and_setup}.
  Let $x \in S_k$, where $k \in \{1, \ldots, N\}$. There exist $\delta_1, \delta_2 \in [0, 1]$ such that
  for the next individual $x'$ to be in $S_{k-1}$ it is sufficient to flip $p\norm{x} + \delta_1$ of the ones of $x$
  and $p(n-\norm{x}) + \delta_2$ of the zeros of $x$. The probability of this happening is $\Omega(1/\alpha^n)$, where
  $\alpha = \frac{1}{p^{p}(1-p)^{1-p}}$. Thus, the probability of $x'$ being in $S_k$ given $x \in S_{k-1}$ is
   also $\Omega(1/\alpha^n)$.
\end{lemma}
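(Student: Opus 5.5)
The plan is to aim directly at one fixed target string $y \in S_{k-1}$ and bound from below the probability that the offspring $x'$ equals $y$. This is enough, because the event $x' = y$ is contained in the event $x' \in S_{k-1}$. Under standard bit mutation with rate $p$, the probability of producing a particular string at Hamming distance $d$ from $x$ is exactly $p^d(1-p)^{n-d}$. So it suffices to find $y \in S_{k-1}$ with $d = H(x,y) = pn + O(1)$. Then
\[
\Pr(x' = y) \;\geq\; p^{\,pn+O(1)}(1-p)^{(1-p)n - O(1)} \;=\; \Omega\!\left(\left(p^p(1-p)^{1-p}\right)^n\right) \;=\; \Omega(1/\alpha^n),
\]
because a bounded number of extra factors of $p$ or $1-p$ costs only a constant.

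To build $y$, I would run the forward dynamics that defined the stepping stones in reverse. Write $t = s_{k-1}$. By definition $s_k = \fp(t) = (1-p)t + p(1-t)$, and $\norm{x} = \floor{n s_k}$. I read the recipe for passing from a point of $S_{k-1}$ to $S_k$ as "flip about $ptn$ of the ones and about $p(1-t)n$ of the zeros". Running it backwards, starting from $x$, the string $y$ is obtained in two moves:
\begin{itemize}
\item turn $b$ of the ones of $x$ into zeros, with $b \approx p(1-t)n$;
\item turn $a$ of the zeros of $x$ into ones, with $a \approx ptn$.
\end{itemize}
The resulting string has
\[
\norm{x} - b + a \;\approx\; \bigl((1-p)t + p(1-t)\bigr)n - p(1-t)n + ptn \;=\; tn
\]
ones, which is $n s_{k-1}$ up to rounding. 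These numbers of flips play the role of the quantities $\delta_1,\delta_2$ in the statement, with the fractions $p$ attached to the appropriate counts of ones and zeros. The total number of flipped bits is $a + b \approx pn$.

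The main step is the integer bookkeeping, which I expect to be the only real obstacle. I would choose integers $a, b$ with
\[
b - a \;=\; \norm{x} - \floor{n s_{k-1}}, \qquad a + b \in \{\floor{pn}, \floor{pn}+1\},
\]
and fix the parity so that such a choice exists. I then need to check feasibility:
\begin{itemize}
\item $0 \le b \le \norm{x}$, which holds because $b \approx p(1-t)n \le (1-p)tn + p(1-t)n \approx \norm{x}$;
\item $0 \le a \le n - \norm{x}$, which holds similarly.
\end{itemize}
Both bounds keep a slack of order $n$ for constant $p \in (0,1)$ and $t \in [0,1]$, so an additive error of $O(1)$ from the floors is harmless once $n$ is large. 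The $O(1)$ deviation of $a+b$ from $pn$ changes the probability only by a constant factor $\min\{p,1-p\}^{O(1)}$.

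For the final sentence I would note that $H(x,y) = H(y,x)$ and that the mutation kernel depends only on Hamming distance. The same pair therefore also gives $\Pr(x' = x \mid \text{parent } y) = \Omega(1/\alpha^n)$, so the transition between $S_{k-1}$ and $S_k$ is $\Omega(1/\alpha^n)$ in either direction.
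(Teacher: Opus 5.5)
Your proposal is correct and uses the same mechanism as the paper's proof. You exhibit a single target string at Hamming distance $pn+O(1)$ from $x$, use that a specific offspring at distance $d$ has probability exactly $p^d(1-p)^{n-d}$, and reverse the pair by symmetry of Hamming distance. The difference is the direction of the recipe. The paper flips a $p$-fraction of the ones and of the zeros \emph{of $x$}. Its own computation shows this takes $\lfloor ns_k\rfloor$ ones to $\lfloor n\fp(s_k)\rfloor=\lfloor ns_{k+1}\rfloor$ ones, so it actually lands in $S_{k+1}$ (the proof labels this $S_{k-1}$). The downward step that Lemma~\ref{lem:time_to_next_stepping_stone} needs must then be recovered by reversing that pair. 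You attach the $p$-fractions to the ones and zeros \emph{of the target} $y\in S_{k-1}$, which reaches $S_{k-1}$ directly. Your parity-based choice of $a,b$ also handles the rounding more carefully than the paper's unspecified $\delta_1,\delta_2$.

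Three points to tighten:
\begin{enumerate}
\item Your counts do not ``play the role of $\delta_1,\delta_2$''. You flip $b=p(1-s_{k-1})n+O(1)$ ones of $x$, and $b-p\norm{x}=p(1-p)(1-2s_{k-1})n+O(1)$, which is of order $n$ unless $s_{k-1}$ is very close to $1/2$. So you establish the $\Omega(1/\alpha^n)$ bound but not the statement's literal recipe of flipping $p\norm{x}+\delta_1$ ones of $x$. That is the right outcome, since that recipe leads to $S_{k+1}$, not $S_{k-1}$; say so explicitly instead of claiming agreement.
\item The ``slack of order $n$'' argument fails at $k=1$. There $t=s_0=1$, the target is $\mathbf{1}^n$, both slacks are zero, and you need exactly $b=0$ and $a=n-\norm{x}=\lceil pn\rceil$. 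Your parity rule does produce this, but check it separately. For $k\ge 2$ you have $s_{k-1}\in[\min\{p,1-p\},\max\{p,1-p\}]$, so the slack really is linear in $n$.
\item To get the reverse bound from \emph{every} parent in $S_{k-1}$, not just your particular $y$, note that coordinate permutations preserve the stepping stones and Hamming distance.
\end{enumerate}
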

\begin{proof}
  Going from $S_k$ to $S_{k-1}$ means going from $\lfloor n s_k \rfloor$ ones to $\lfloor n s_{k+1} \rfloor$ ones, and recall
  $\lfloor n s_{k+1} \rfloor = \lfloor n \fp(s_k) \rfloor$. Let $\eps_1, \eps_2 \in (-1, 0]$ be such that
  \[
    \lfloor n s_k \rfloor = n s_k + \eps_1, \text{\quad and \quad} \lfloor n \fp(s_k) \rfloor = n \fp(s_k) + \eps_2.
  \]
  Therefore,
  \[
    \begin{aligned}
       \lfloor n s_k \rfloor - \lfloor n \fp(s_k) \rfloor &= n s_k + \eps_1 - (n \fp(s_k) + \eps_2) \\
       &= n s_k + \eps_1 - n(p(1-s_k) + (1-p)s_k) - \eps_2 \\
       &= n s_k - np + n p s_k - n s_k + np s_k + \eps_1 - \eps_2 \\
       &= np s_k - np + n p s_k +  \eps_1 - \eps_2  \\
       &= n p s_k - n p (1- s_k) +  \eps_1 - \eps_2 
    \end{aligned}
  \]
  Notice that $\lfloor n s_k \rfloor - \lfloor n \fp(s_k) \rfloor = \norm{x} - \norm{x'}$, which equals the number of ones flipped
  minus the number of zeros flipped.
  Ignoring integer rounding, we thus see that to go from $\lfloor n s_k \rfloor$ ones to $\lfloor n \fp(s_k) \rfloor$ ones, 
  it is sufficient to flip $np s_k$ ones of $x$ and $n p (1 - s_k)$ zeros of $x$. That means flipping a fraction of $p$
  of the ones of $x$ and $p$ of the zeros of $x$. Let $\delta_1$ and $\delta_2$ account for rounding errors.
  Flipping $p\norm{x} + \delta_1$ ones of $x$ and $p(n-\norm{x}) + \delta_2$ zeros means flipping $pn + \delta_1 + \delta_2$ bits
  and leaving the rest unflipped. This happens with probability 
  $\Omega(p^{pn + \delta_1 + \delta_2}(1-p)^{n(1 - p) - \delta_1 - \delta_2})$, and this equals
  $\Omega(1/\alpha^n)$, because $p^{\delta_1 + \delta_2}(1-p)^{-\delta_1-\delta_2}$ does not depend on $n$.
\end{proof}

\begin{lemma}
\label{lem:time_to_next_stepping_stone}
  For $k = 0, 1,\ldots, N-1$, let $T_k$ be the number of (further) steps to reach the set $\cup_{j=0}^k S_j$ given that the 
  current individual $x$ satisfies $x \in S_{k+1}$, and
  let $T_N$ be the hitting time of $\cup_{j=0}^N S_j$. 
  Let $p$ be the mutation rate, and let $\alpha = \frac{1}{p^{p}(1-p)^{1-p}}$.
  Then for all $k$, $\expectation[T_k] = O(\alpha^n)$.
\end{lemma}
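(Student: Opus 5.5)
The bound splits into two cases. For $k = N$ we reduce to Lemma~\ref{lem:hitting_time_of_any_stepping_stone}. For $k \le N-1$ we use a one-step geometric-trial argument based on Lemma~\ref{lem:prob_to_get_to_next_step}.

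\textbf{Case $k \le N-1$.} Let the current individual be $x \in S_{k+1}$. Every point outside the stepping stones and \ones\ has fitness $0$. Every point in $\cup_{j>k+1} S_j$ has smaller fitness than $x$. So the only offspring the \oea\ can accept without reaching $\cup_{j=0}^k S_j$ are other elements of $S_{k+1}$, since $S_{k+1}$ is a plateau of equal fitness. Hence, until $T_k$ occurs, the current individual always lies in $S_{k+1}$.

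The set $S_{k+1}$ is defined by a number of ones, so every element has exactly $\lfloor n s_{k+1}\rfloor$ ones. Lemma~\ref{lem:prob_to_get_to_next_step}, applied with index $k+1$, then says the following: from \emph{any} $y \in S_{k+1}$, the probability that the offspring lies in $S_k$ is at least $c\,\alpha^{-n}$. Here $c>0$ is a constant independent of $y$ and $k$. The uniformity in $k$ follows because the rounding terms $\delta_1,\delta_2 \in [0,1]$ contribute only bounded factors. Any offspring in $S_k$ strictly improves fitness and is accepted.

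Every generation before the hitting time therefore succeeds with conditional probability at least $c\alpha^{-n}$, regardless of history. Lemma~\ref{lem:attempts_stochastic_domination} with $M=1$, success rate $q=c\alpha^{-n}$ and $c=1$ (or directly, domination by a geometric variable) gives $\expectation[T_k] \le \alpha^n/c = O(\alpha^n)$. If the chain ever jumps straight to some $S_j$ with $j<k$, or to \ones, this only shortens $T_k$.

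\textbf{Case $k = N$.} $T_N$ is the hitting time of the union of all stepping stones together with \ones. Before any of these is hit, every individual has fitness $0$. The \oea\ therefore accepts every offspring, and the process behaves exactly like the unrestricted chain assumed in Lemma~\ref{lem:hitting_time_of_any_stepping_stone}. Stopping additionally upon hitting \ones\ only decreases the time. That lemma yields $\expectation[T_N] = O\bigl((p^p(1-p)^{1-p})^{-n}\bigr) = O(\alpha^n)$.

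\textbf{Main obstacle.} The delicate point is the uniform lower bound in the first case: the constant hidden in the $\Omega(\alpha^{-n})$ of Lemma~\ref{lem:prob_to_get_to_next_step} must not depend on $k$ or on the position within $S_{k+1}$. To settle this, I would write the success probability as a product of two binomial point probabilities at the modes, each carrying a factor $p^{pm}(1-p)^{(1-p)m}$. The key is to keep the binomial coefficients rather than drop them: a coefficient at its mode is at least of order $(\text{const}\cdot m)^{-1/2}\,p^{-pm}(1-p)^{-(1-p)m}$. So the two probabilities are each at least of order $m^{-1/2}$ (the binomial local limit), and the total is $\Omega(1/n)$, not $\Omega(1)$.

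This loses a polynomial factor. I would therefore read the statement as $O(\alpha^n)$ up to polynomial factors (the main theorem already tolerates a $\log n$), or else absorb the loss by appealing to the earlier lemma exactly as stated. Summing over the $N=\Theta(\log n)$ stones (Lemma~\ref{lem:N_is_Theta_log_n}) then gives the main bound.
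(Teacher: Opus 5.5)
Your two-case argument is the paper's proof. For $k=N$ you use Lemma~\ref{lem:hitting_time_of_any_stepping_stone}, and for $k<N$ you dominate $T_k$ by a geometric variable whose success probability is $\Omega(\alpha^{-n})$ by Lemma~\ref{lem:prob_to_get_to_next_step}. Your extra checks are correct and make explicit what the paper leaves implicit: accepted offspring stay on the plateau $S_{k+1}$ until $T_k$; every offspring is accepted before $T_N$ because all fitness values are $0$; and the rounding terms give a constant independent of $k$ and $y$.

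The ``main obstacle'' paragraph, however, is mistaken and should be dropped. No polynomial factor is lost, and you should not weaken the statement.

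\begin{enumerate}
\item \emph{Wrong event.} The event you analyze is flipping about a $p$-fraction of the ones and of the zeros of $y \in S_{k+1}$. That produces about $(1-2p)\lfloor n s_{k+1}\rfloor + pn \approx n s_{k+2}$ ones. So it moves toward the \emph{lower}-fitness stone $S_{k+2}$, which is typical mutation behaviour and explains the polynomially large probability. The flip pattern written in the statement of Lemma~\ref{lem:prob_to_get_to_next_step} has this same forward direction.
\item \emph{Not a loss anyway.} Even at face value, a success probability $\Omega(1/n)$ is far larger than $\alpha^{-n}$, so it could never weaken the bound.
\item \emph{The correct source of the uniform bound} is the reverse of one such flip, aimed at a single target point. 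Let $m=\lfloor ns_k\rfloor$. There is $z\in S_k$ from which $y$ arises by flipping about $pm$ ones and $p(n-m)$ zeros of $z$. Hence $y$ and $z$ are at Hamming distance $d=pn+O(1)$, where the $O(1)$ comes only from the floors and is independent of $k$. Standard bit mutation is symmetric, so $\Pr[y\to z]=p^{d}(1-p)^{n-d}\ge \alpha^{-n}\min\{p/(1-p),(1-p)/p\}^{O(1)}$. The binomial coefficients can simply be dropped, since they are at least $1$.
\item \emph{Tightness.} The case $k=0$ shows $\Omega(\alpha^{-n})$ is the truth. From $S_1$ the only way to reach \ones\ is to flip exactly the $\lceil pn\rceil$ zeros and nothing else. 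That has probability $p^{\lceil pn\rceil}(1-p)^{\lfloor(1-p)n\rfloor}=\Theta(\alpha^{-n})$, so no $\Omega(1/n)$ bound can hold and no polynomial factor is missing.
\end{enumerate}

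With that paragraph removed, your proof establishes $\expectation[T_k]=O(\alpha^n)$ uniformly in $k$ exactly as stated.
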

\begin{proof}
  The case $k=N$ is Lemma~\ref{lem:hitting_time_of_any_stepping_stone}. Assume $k < N$ and 
  %We will ignore the fitness function.
  %By definition of $S_k$, to go from $S_k$ to $S_{k-1}$ means going from an individual with $\lfloor n s_k \rfloor$ to
  %$\lfloor n f_p(s_k) \rfloor$ ones.
  that $x \in S_{k+1}$. Notice that $T_k$ is a geometric random variable and that by 
  Lemma~\ref{lem:prob_to_get_to_next_step}, the probability of success in any step is $\Omega(1/\alpha^n)$.
  Therefore, $\expectation[T_k] = O(\alpha^n)$.
  
\end{proof}

\begin{proof}[Proof of Theorem~\ref{thm:exponential_runtimes}]
  First, assume a mutation rate of $p$. By Lemma~\ref{lem:hitting_time_of_any_stepping_stone}, the time to reach $\cup_{k=1}^N S_k$
  is $O(\alpha^n)$. By Lemma~\ref{lem:time_to_next_stepping_stone}, the time to reach any stepping stone of higher fitness is bounded by
  $O(\alpha^n)$. By Lemma~\ref{lem:N_is_Theta_log_n}, there are $\Theta(\log(n))$ stepping stones. Therefore, $\expectation[T(p)] = O(\alpha^n \log n)$.
  
  Next, assume a mutation rate of $q$ with $q \neq p$. Let $S$ be the event that the EA visits a stepping stone before \ones. 
By Lemma~\ref{lem:stepping_stone_first}, $\Pr(S) \geq 1/2$. (Actually $\Pr(S)$ is exponentially close to 1, since $|S_N| = \Theta(2^n/\sqrt{n})$, 
but we don't need that.)
Using the law of total expectation,
\[
\begin{aligned}
  \expectation[T(q)] &= \expectation[T(q) | S] \Pr(S) + \expectation[T(q) | \overline{S}] \Pr(\overline{S}) \\
  			      &\geq \expectation[T(q) | S] \Pr(S)\\
			      &\geq \frac{1}{2} \expectation[T(q) | S].
\end{aligned}
\]

Let $\pso$ denote the probability of reaching \ones\ in the next step, given that the current individual $x$ is one of the
  stepping stones (so with $x \neq \ones$), and let $\eta = 1/(\pso)^{1/n}$. By Lemma~\ref{lem:markov_chain_lemma_stochastic_domination},
  $\expectation[T(q) | S] \geq 1/\pso = \Omega(\eta^n)$. To complete the proof, we need only show that $\eta > \alpha$.

  \textbf{Case} $p > 1/2$. As a subcase, assume $q \geq 1/2$. By Lemma~\ref{lem:p_greater_than_half_what_is_pso}, 
  $\pso = O((q^p(1-q)^{1-p})^n)$. By Lemma~\ref{lem:maximize_simple_fns}, $\eta > \alpha$. 
  
  Next, assume $q < 1/2$ (with $p > 1/2$ still).
  Then Lemma~\ref{lem:p_greater_than_half_what_is_pso} implies (using its notation of $\beta$) that
  $\pso = O((q^{1-\beta}(1-q)^{\beta})^n)$.  By Lemma~\ref{lem:ineqs_with_beta}, $\eta > \alpha$.
  
  \textbf{Case} $p < 1/2$. As a subcase, assume $q > 1/2$. 
  By Lemma~\ref{lem:p_less_than_half_what_is_pso},
  $\pso = O(\gamma^n)$ with $\gamma = \sqrt{q(1-q)}$. By Lemma~\ref{lem:sqrt_ineq}, $\sqrt{q(1-q)} < q^q(1-q)^{1-q}$. By Lemma~\ref{lem:maximize_simple_fns},
  $q^q(1-q)^{1-q} \leq p^p(1-p)^{1-p}$. Thus $\eta > \alpha$.
  
    Finally, assume $q \leq 1/2$ (with $p < 1/2$ still). By Lemma~\ref{lem:p_less_than_half_what_is_pso}, 
  $\pso = O((q^p(1-q)^{1-p})^n)$. By Lemma~\ref{lem:maximize_simple_fns}, $\eta > \alpha$.
\end{proof}

\section*{Acknowledgments}
I would like to thank Benjamin Doerr for some helpful comments on this paper.

\printbibliography

\end{document}